\newcommand{\KL}{D_{\textnormal{KL}}}
\newcommand{\Tr}{\textnormal{Tr}}
\begin{document}

\title{Evaluation Metrics for Conditional Image Generation}


\author{Yaniv Benny \and Tomer Galanti \and Sagie Benaim \and Lior Wolf
}


\institute{Y. Benny \at
              Tel-Aviv University, Israel \\
              \email{yanivbenny@mail.tau.ac.il}           
           \and
           T. Galanti \at
              Tel-Aviv University, Israel \\
              \email{tomergalanti@mail.tau.ac.il}           
           \and
           S. Benaim \at
              Tel-Aviv University, Israel \\
              \email{sagiebenaim@mail.tau.ac.il}           
           \and
           L. Wolf \at
              Tel-Aviv University, Israel; Facebook AI Research \\
              \email{wolf@cs.tau.ac.il}           
}

\date{}

\maketitle

\begin{abstract}
We present two new metrics for evaluating generative models in the class-conditional image generation setting. These metrics are obtained by generalizing the two most popular unconditional metrics: the Inception Score (IS) and the Fr\'{e}chet Inception Distance (FID). A theoretical analysis shows the motivation behind each proposed metric and links the novel metrics to their unconditional counterparts. The link takes the form of a product in the case of IS or an upper bound in the FID case. We provide an extensive empirical evaluation, comparing the metrics to their unconditional variants and to other metrics, and utilize them to analyze existing generative models, thus providing additional insights about their performance, from unlearned classes to mode collapse.
\keywords{Image Generation \and Conditional Generation \and Evaluation Metrics \and Inception Score \and Fr\'{e}chet Inception Distance}
\end{abstract}

\newpage
\section{Introduction} \label{introduction}
Unconditional image generation models have seen rapid improvement both in terms of generation quality and diversity.
These generative models are successful, if the generated images are indistinguishable from real images sampled from the training distribution. This property can be evaluated in many different ways, the most popular are the Inception Score (IS)~\cite{salimans2016improved}, which considers the output of a pretrained classifier, and the  Fr\'{e}chet Inception Distance (FID)~\cite{heusel2017gans}, in which measures the distance between the distributions of extracted features of the real and the generated data.

While unconditional generative models take as input a random vector, conditional generation allows one to control the class or other properties of the synthesized image. In this work, we consider class-conditioned models, introduced in \cite{mirza2014conditional}, where the user specifies the desired class of the generated image.
Employing unconditional metrics, such as IS and FID, in order to evaluate conditional image generation fails to take into account whether the generated images satisfy the required condition. 
On the other hand, classification metrics, such as accuracy and precision, currently used to evaluate conditional generation, have no regard for image quality and diversity.

One may opt to combine the unconditional generation and the classification metrics to produce a valuable measurement for conditional generation. However, this suffers from a few problems. First, the two components are of different scales and the trade-off between them is unclear. Second, they do not capture changes in variance within the distribution of each class. {\color{black}To illustrate this, consider Fig.~\ref{fig:fid_classifier}, which depicts two different distributions for polar coordinates $(R_i + \alpha, 0.1\beta )$ where $R_i$ is a different radius for each class $i \in \{1,2\}$, $\alpha \sim \mathcal{N}(0, 0.1^2)$, $\beta \sim U(0,2\pi)$. Distribution `A' has $R_1=1, R_2=3$, therefore it has a zero mean, for each of the classes, and a standard deviation 1.0 for the first class and 3.0 for the second class independently in each axis, which results in a standard deviation of 2.0 for the entire distribution. Distribution `B' corresponds to radii $R_1=1.5, R_2=2.5$, and therefore it also has a zero mean but a standard deviation of $1.5$ for the first class and $2.5$ for the second class in each axis, which again, results in a standard deviation of 2.0 for the entire distribution. Since the FID score compares distributions by their mean vectors and covariance matrices, the FID (between distributions `A' and `B') is zero. The classification error (the optimal classifier is shown in green) is also zero, despite the in-class distributions being different.}

In order to provide valuable metrics to evaluate and compare conditional models, we present two metrics, called Conditional Inception Score (CIS) and Conditional Fr\'{e}chet Inception Distance (CFID). The metrics contain two components each: (i) the within-class component (WCIS/WCFID) measures the quality and diversity for each of the conditional classes in the generated data. In other words, it measures the ability to replicate the distribution of each class in the true samples; (ii) the between-class component (BCIS/BCFID) measures how close the representation of classes in the generated distribution is to the representation in the real data distribution. See Fig.~\ref{fig:separation} for an illustration.

In contrast to the combined FID and classifier, our WCFID and BCFID components of the FID are both larger than zero for the example in Fig.~\ref{fig:fid_classifier}, successfully capturing the differences between the distributions. 

Our analysis shows direct links between the novel conditional metrics and their unconditional counterparts. The (unconditional) Inception Score can be decomposed to a multiplication between BCIS and WCIS. We further show that due to the bounded region of the metrics, this translates to a trade-off between BCIS and WCIS and that each one of them form a tight lower bound on the IS. In the analysis of the FID score, we show that the sum of WCFID and BCFID forms a tight upper bound of the FID.

After analyzing the metrics, we performed various experiments to ground the theoretical claims and to highlight the role of the new metrics in evaluating conditional generation models. 
First, a set of simulations was conducted, in which we performed label noising, image noising, image manipulation, and simulated mode collapse. Under all conditions, our methods came out as the most sensitive to the applied augmentations. We then evaluated several pretrained models of popular architectures on various datasets and training schemes using the proposed scores and identified significant insights that were detected by our metrics. Our metrics were found to be a decisive factor to determine the generation performance in each dataset.

\begin{figure}[t]
\centering
\includegraphics[width=\linewidth, trim={80 30 60 17}, clip]{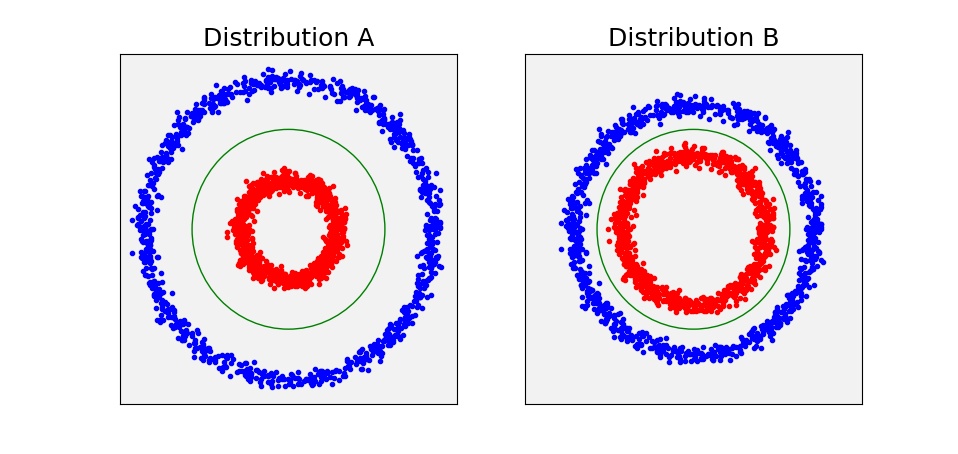}
\caption{A case against measuring success by relying on FID combined with a classification score. Shown are two distributions, each with two classes, in a two-dimensional feature space. The green circle acts as the classifier. For both distributions, the overall mean and variance are equal, therefore, the FID is zero. The classification error is also zero, and the two distributions are, therefore, indistinguishable by this score as well. However, within the classes we see a shift in variance in the second distribution compared to the first. Our proposed WCFID and BCFID metrics both show values above zero and, therefore, detect the difference between the distributions.}
\label{fig:fid_classifier}
\end{figure}

\begin{figure}[t]
\centering
\begin{tabular}{cc}
\includegraphics[width=0.46\linewidth, trim={35 80 30 80}, clip]{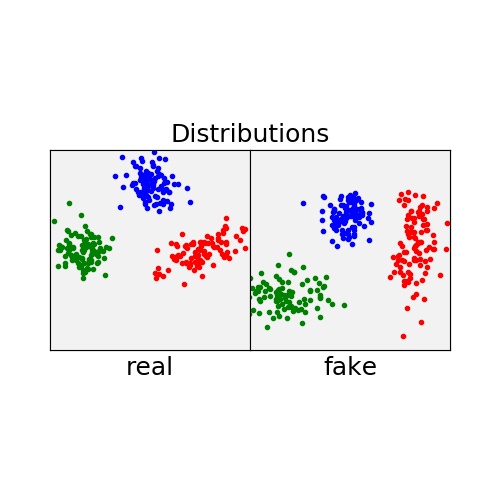} &
\includegraphics[width=0.46\linewidth, trim={35 80 30 80}, clip]{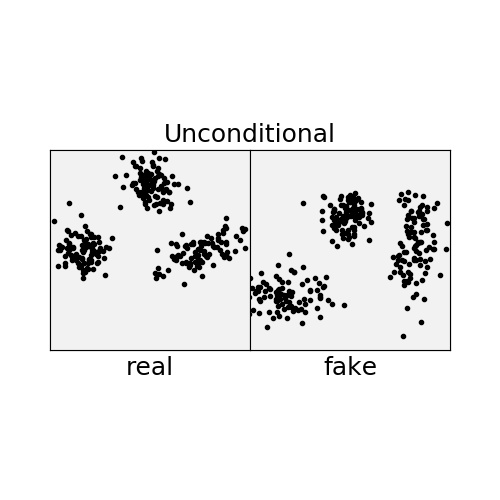} \\
(a)&(b)\\
\includegraphics[width=0.46\linewidth, trim={35 80 30 80}, clip]{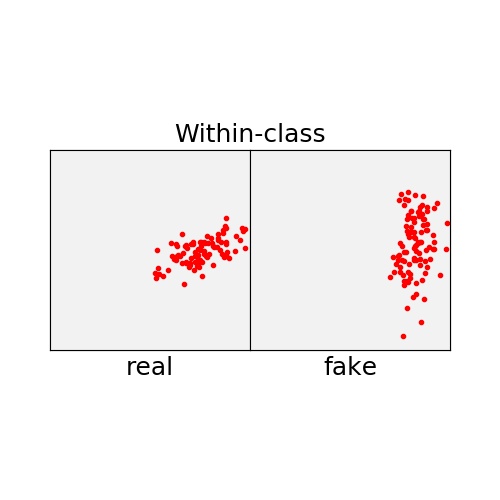} &
\includegraphics[width=0.46\linewidth, trim={35 80 30 80}, clip]{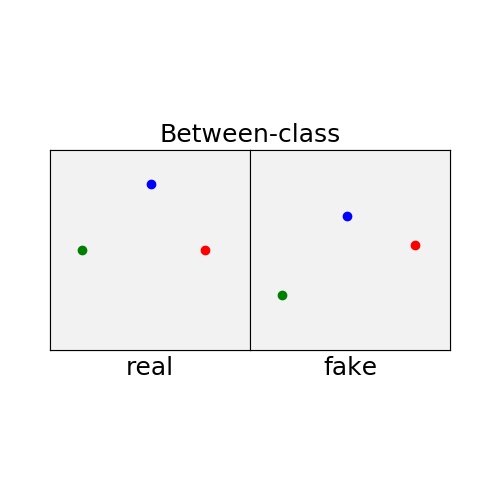} \\
(c)&(d)\\
\end{tabular}
\caption{The differences between unconditional, within-class and between-class evaluations for two given distributions with labelled samples. (a) Sample distributions. (b) The unconditional evaluation disregards the labels and compares the distance between the distributions. (c) The within-class evaluation compares each class in the first distribution with the corresponding class in the second (shown for one class). (d) The between-class evaluation compares the distribution of class averages.}
\label{fig:separation}
\end{figure}

\subsection{Related Work}

\noindent{\bf Generative Models}
Generative models, and in particular Generative Adversarial Networks~\cite{GAN} aim to generate realistic looking images from a target distribution, while capturing the diversity of images. Advances in the loss and architecture allowed for improved quality and diversity of generation. For instance, \cite{pmlr-v70-arjovsky17a,10.5555/3295222.3295327} attempt to minimize the Wasserstein distance between the generated and real distributions. This allowed for improved variability in generation, in particular by reducing mode collapse.
On the architectural side, Progressive GANs~\cite{karras2018progressive}, StyleGAN~\cite{karras2019style} and StyleGANv2~\cite{Karras2019stylegan2}, introduced advanced architectures and training methods, allowing for further improvements. 

\noindent {\bf Conditional Generation}
In conditional generation, control over the generation is provided, e.g., by class-conditioning~\cite{mirza2014conditional,chen2016infogan,singh2019finegan}, a given text ~\cite{zhang2017stackgan,xu2018attngan}, requiring specific semantic features~\cite{johnson2018image}, or finding analogs to images from another distribution \cite{isola2017image,zhu2017unpaired}.
The recent state of the art in class-conditional generation, which is the BigGAN method~\cite{brock2018large}, can learn conditional representation on ImageNet~\cite{russakovsky2015imagenet} with high quality and diversity.

To train these models, several changes have been proposed to the unconditional method. CGAN~\cite{mirza2014conditional} injects the conditional component to the discriminator along with the image, ACGAN~\cite{odena2017conditional} added an auxiliary classifier tasked to accurately predict the conditioned label, SGAN~\cite{odena2016semi} modified the discriminator output to detect real classes while treating fake images as an additional class. A special unsupervised setting was proposed by InfoGAN~\cite{chen2016infogan}, where the condition is unlabelled in the real data and the model constructs a disentangled representation by maximizing the mutual information between the conditioned variable and the observation.

\noindent {\bf Evaluation Metrics}
To evaluate different models in terms of high quality generation and diversity, evaluation metrics were proposed for the unconditional generation setting. 
The Inception Score (IS)~\cite{salimans2016improved} uses the predictions of a pretrained classifier, InceptionV3~\cite{Szegedy2015RethinkingTI}, to assess: 1. Quality: whether the conditional probability of a generated sample $G(z)$ over the labels $y$, is highly predictable (low entropy) and 2. Diversity: whether the marginal probability of labels over all generated samples is highly diverse (high entropy). 
The Fr\'{e}chet Inception Distance (FID)~\cite{heusel2017gans}, was proposed as an alternative to the IS, by considering the distribution of features of real data and generated data. FID models these distributions as multivariate Gaussian distributions and measures the distance between them. FID was shown to be sensitive to mode collapse and more robust to noise than IS. Additional metrics, such as Perceptual Path Length (PPL)~\cite{karras2019style} and Kernel Inception Distance (KID)~\cite{binkowski2018demystifying} were also introduced. 
Still, the IS and FID are the most widely accepted metrics for image generation. 

Nevertheless, these measures are designed for the unconditional setting, and for class-conditional, they do not assess the level at which the categorical condition manifests itself in the generated data. In this work, we extend the IS and FID to the class-conditional setting, showing their relation to their unconditional counterparts and demonstrate the usefulness of these metrics in the conditional setting.

{\color{black}Accuracy-based evaluation methods have been proposed to evaluate the conditional generation capabilities of a model. For example, by measuring the accuracy of predicting the conditioned label with a pretrained classifier. However, accuracy is not a reliable benchmark for image generation, since only a slight higher probability for the correct class is necessary for correct prediction, and the image can be far from looking realistic or truly depicting the required condition. Another approach, called Classification Accuracy Score (CAS)~\cite{ravuri2019classification} has been proposed. Instead of measuring the accuracy of the generated images on a pretrained classifier, the generated images are used to train a classifier that is then used to predict the labels of the real images. This reversed approach has been shown to be more promising, since the images need to contain the necessary features to define the represented class, in order for the classifier to work on real images as well. However, it requires training a different classifier for each generative model, which can result in a biased evaluation, since different models can benefit from different classifier hyperparameters. The need to retrain classifiers for each evaluation is laborious and can prevent the method from being used, e.g., for selecting the best epoch during training.} 

{\color{black}Some recent attempts have been made to asses conditional generation with modified versions of the IS and FID. A Conditional Inception Score for the Image-to-Image translation task was proposed by Huang et al.~\cite{huang2018multimodal}. This method measures the difference in the probability distribution of images before and after a translation to another category. While this method relies on the IS equation as a tool for measurement, it does not follow the principles behind the IS, i.e. low entropy for a single image and high entropy on average. Instead, it only measures how much the prediction of each image has changed. In addition, this method is only applicable to image-to-image translation and not for conditional image generation. Miyato et al.~\cite{miyato2018cgans} were the first to our knowledge to measure the intra-FID, which is equal to our WCFID component. Their presentation of this metric lacks thorough experiments and is presented without any theoretical analysis. Nevertheless, these previous attempts to establish conditional metrics confirm the need for conditional evaluations scores and point to the possible ingredients of such solutions. Our work contributes to both approaches by providing justification and explanation, and also by building unified metrics.}

\section{Problem Setup}
{\color{black}The following setup addresses Generative Adversarial Networks, since they are currently, along with Variational Autoencoders, the most popular methods for image generation. However, the evaluation metrics are not limited to any model and can be applied to any class-conditionally generated images. The only requirement is that each image (either real or fake), corresponds to one class from a finite set of classes.
}

We consider a distribution of real samples:
\begin{equation}\label{eq:cond}
c \sim \mathcal{D}_C, \quad x \sim \mathcal{D}^c_R,
\end{equation}
where $\mathcal{D}_C$ is a distribution over a set of $K>1$ classes and $\mathcal{D}^c_{R}$ is the conditional distribution of a sample $x\in \mathbb{R}^n$ taken from the class $c$. The algorithm is provided with a dataset of i.i.d labelled examples $\mathcal{S} = \{(x_i,c_i)\}^{m}_{i=1}$ that were sampled from the generative process in Eq.~\ref{eq:cond}. In addition, the distribution $\mathcal{D}_C$ of classes and its corresponding probability density function $p(c)$ are known or assumed to be uniform. The distribution of real samples $x$ marginalized over $c \sim \mathcal{D}_C$ is denoted by $\mathcal{D}_R$ and the corresponding probability density function by $p_R(x)$.

In {\bf conditional generation}, the algorithm learns a generative model $G$ that tries to generate samples that are similar to the real samples in $\mathcal{D}_R$. The generative model takes a random seed $z \sim \mathcal{D}_Z \subset \mathbb{R}^{d}$ and a class $c \sim \mathcal{D}_C$ as inputs and returns a generated sample $G(z,c)$. Here, $\mathcal{D}_Z$ is a pre-defined distribution over a latent space $\mathbb{R}^d$ of dimension $d < n$, where $n$ is the dimensionality of the samples. Typically $\mathcal{D}_Z$ is the standard normal distribution. We denote by $\mathcal{D}_G$ the distribution of generated samples.

In conditional generation, we are interested in two aspects of the generation. Images from the same conditioned variable should be of the same class in $\mathcal{D}_R$, and different latent variables $z$ should cover the range of each class.

The {\bf category discovery} setting is a special case of the conditional generation, proposed in~\cite{chen2016infogan}, where the algorithm is provided with a set of unlabelled samples $\mathcal{S} = \{x_i\}^{m}_{i=1}$. The algorithm is still aware of the existence of the partition of the data into classes that are distributed according to $\mathcal{D}_C$. The goal of the algorithm is to generate samples that are similar to the real samples and also have them clustered in a proper manner into $K$ clusters.

\subsection{Inception Score} 
The Inception Score (IS) is a method for measuring the realism of a generative model's outputs. For a given generative model $G$, a latent vector $z \sim \mathcal{D}_Z$ and a random class $c\sim \mathcal{D}_C$, we apply a pretrained classifier on the generated image $x = G(z,c)$ to obtain a distribution over the labels, which is denoted by $p_G(y|x)$. We denote the corresponding random variables by $Z$, $C$, $X = G(Z,C)$ and $Y \sim p_G(y|X)$, the distribution of $X$ by $\mathcal{D}_G$ and the probability density function of $x$ by $p_G(x)$. Images that contain meaningful objects should have a condition distribution $p_G(y|x)$ of low entropy. Furthermore, we expect the model to generate varied images, so the marginal distribution $p_G(y) := \mathbb{E}_{z,c} [p_G(y|x=G(z,c))]$ should have a high entropy. The Inception Score is computed as:
\begin{equation}
IS(X;Y) := \exp\{\mathbb{E}_{x\sim \mathcal{D}_G}[\KL(p_G(y|x) \| p_G(y)]\},
\end{equation}
where $\KL(p\|q)$ is the KL-divergence between two probability density functions. A high score indicates both a high variety in data and that the images are meaningful.

The Inception Score can also be formulated using the mutual information between the generated samples and the class labels:
\begin{equation}\label{eq:mutualIS}
IS(X;Y) = \exp\{I(X;Y)\},
\end{equation}
where $I(X;Y)$ is the mutual information between $X$ and $Y$. As can be seen, by maximizing the IS, one maximizes the mutual information between $X$ and $Y$. However, this equation indicates that the IS is not sufficient in order to evaluate generative models in the conditional generation settings, since the score does not take the conditioned class into account.

Due to the properties of the mutual information, it can be seen that for a domain with $K$ classes, the score is within the range $[1,K]$.

\subsection{Fr\'{e}chet Inception Distance} 
The Fr\'{e}chet distance $d^2(\mathcal{D}_1,\mathcal{D}_2)$ between two distributions $\mathcal{D}_1,\mathcal{D}_2$ is defined by:
\begin{equation}
d^2(\mathcal{D}_1,\mathcal{D}_2) := \min_{X,Y}\mathbb{E}_{X,Y}[\|X-Y\|^2],
\end{equation}
where the minimization is taken over all random variables $X$ and $Y$ having marginal distributions $\mathcal{D}_1$ and $\mathcal{D}_2$, respectively. In general, the Fr\'{e}chet distance is intractable, due to its minimization over the set of arbitrary random variables. Fortunately, as shown by~\cite{RePEc:eee:jmvana:v:12:y:1982:i:3:p:450-455}, for the special case of multivariate normal distributions $\mathcal{D}_1$ and $\mathcal{D}_2$, the distance takes the form:
\begin{equation}
d^2(\mathcal{D}_1,\mathcal{D}_2) = \|\mu_1 - \mu_2 \|^2 + \Tr(\Sigma_1 + \Sigma_2 - 2(\Sigma_1 \Sigma_2)^\frac{1}{2})
\end{equation}
where $\mu_i$ and $\Sigma_i$ are the mean and covariance matrix of $\mathcal{D}_i$. The first term measures the distance between the centers of the two distributions. The second term: 
\begin{equation}
d_0(\mathcal{D}_1,\mathcal{D}_2) := \Tr(\Sigma_1 + \Sigma_2 - 2(\Sigma_1 \Sigma_2)^\frac{1}{2}),
\end{equation} 
defines a metric on the space of all covariance matrices of order $n$. 

For two given distributions $\mathcal{D}_R$ of real samples and $\mathcal{D}_G$ of the generated data, the FID score~\cite{heusel2017gans} computes the Fr\'{e}chet distance between the real data distribution and generated data distribution using a given feature extractor $f$ under the assumption that the extracted features are of multivariate normal distribution:
\begin{equation}
\begin{aligned}
    FID(\mathcal{D}_R,\mathcal{D}_G) :=& d^2(f \circ \mathcal{D}_R, f\circ \mathcal{D}_G) \\
    =& \|\mu^R - \mu^G\|^2 \\
    &+ \Tr(\Sigma^R + \Sigma^G - 2(\Sigma^R\Sigma^G)^\frac{1}{2}),
\end{aligned}
\end{equation}
where $\mu^R, \Sigma^R$ and $\mu^G, \Sigma^G$ are the centers and covariance matrices of the distributions $f \circ \mathcal{D}_R$ and $ f\circ \mathcal{D}_G$, respectively. For evaluation, the mean vectors and covariance matrices are approximated through sampling from the distribution.

\section{Method}

In this section, we introduce the class-conditioned extensions of the Inception Score and FID.

\subsection{Conditional Inception Score}\label{CIS}

The conditional analysis of the Inception Score addresses both aspects of conditional generation: the need to create realistic and diverse images, and the need to have each generated image match its condition. We define two scores: the between-class (BCIS) and the within-class (WCIS).

BCIS evaluates the IS on the class averages. It is a measurement of the mutual information between the conditioned classes and the real classes. The prediction probabilities for all the samples in each conditioned class are averaged to produce the average prediction probability of the entire class, then the IS is computed on these averages.

The BCIS is defined in the following manner:
\begin{equation}
\begin{aligned}
BCIS(X;Y) :=& IS(C;Y) \\
=& \exp\left\{\mathbb{E}_c[\KL(p_G(y|c) \| p_G(y))] \right\} \\
\end{aligned}
\end{equation}
where,
\begin{equation}
\begin{aligned}
p_G(y|c) &= \frac{1}{p(c)} \cdot \mathbb{E}_{x \sim \mathcal{D}_G} [p_G(y,c|x)] \\
&= \mathbb{E}_{x \sim \mathcal{D}^c_G} [p_G(y|x)] \\
\end{aligned}
\end{equation}

WCIS evaluates the IS within each category. It is a measurement of the mutual information between the real classes conditioned on the samples and the real classes conditioned on the conditioned classes. The final score is the geometric average score over all the classes, which is equivalent to the exponent on the arithmetic average of the mutual information over all the classes. To define this measure, we define two random variables $X_c := (X|C=c)$ and $Y_c := (Y|C=c)$ which are the random variables $X$ and $Y$ conditioned on the class being $c$. 

The WCIS is defined as:
\begin{equation}
WCIS(X;Y) := \exp\{\mathbb{E}_{c}[I(X_c;Y_c)] \},
\end{equation}
where the mutual information is computed as follows:
\begin{equation}
I(X_c;Y_c) = \mathbb{E}_{x \sim \mathcal{D}^c_G}[\KL(p_G(y|x) \| p_G(y|c))],
\end{equation}
where $\mathcal{D}^c_G$ is the distribution of $X_c$. 

In general, we wish the BCIS to be as high as possible and the WCIS to be as low as possible. High BCIS indicates a distinct class representation for each conditioned class and a wide coverage across the conditioned classes, which is a desired property. High WCIS indicates a wide coverage of real classes within the conditioned classes, which is an undesired property, since each conditioned class should represent only a single real class. In this way, one obtains consistent prediction within each class and has high variability between classes.

The following theorem presents the compositional relationship between IS and the proposed conditional measures.
\begin{theorem} 
\label{thm:is}
Let $C\sim \mathcal{D}_C$ and $Z\sim \mathcal{D}_Z$ be two independent random variable. Let $X = G(Z,C)$ for a continuous generator function $G$ and let $Y$ be a discrete random variable distributed by $p(y|X)$. Then,
\begin{equation}\label{eq:ISeqCIS}
IS(X;Y) = BCIS(X;Y) \cdot WCIS(X;Y)
\end{equation}
The proof is provided in the appendix.
\end{theorem}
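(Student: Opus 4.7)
The plan is to recognize Eq.~\eqref{eq:ISeqCIS} as the exponential of an additive decomposition of mutual information, namely
\begin{equation}
I(X;Y) = I(C;Y) + \mathbb{E}_c[I(X_c;Y_c)],
\end{equation}
and then to derive this identity via the chain rule for mutual information applied to the joint variable $(X,C)$.

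First, I would rewrite each of the three scores in terms of mutual information using the definitions already in the excerpt: $IS(X;Y) = \exp\{I(X;Y)\}$ by Eq.~\eqref{eq:mutualIS}; $BCIS(X;Y) = IS(C;Y) = \exp\{I(C;Y)\}$ by definition; and $WCIS(X;Y) = \exp\{\mathbb{E}_c[I(X_c;Y_c)]\}$ by its defining formula. Since $\mathbb{E}_c[I(X_c;Y_c)]$ is exactly the standard conditional mutual information $I(X;Y\mid C)$, the target identity reduces to
\begin{equation}
I(X;Y) = I(C;Y) + I(X;Y\mid C).
\end{equation}

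Next, I would apply the chain rule for mutual information to $I((X,C);Y)$ in two different orders:
\begin{align}
I((X,C);Y) &= I(C;Y) + I(X;Y\mid C), \\
I((X,C);Y) &= I(X;Y) + I(C;Y\mid X).
\end{align}
Equating the two expressions yields $I(X;Y) - I(C;Y) - I(X;Y\mid C) = -I(C;Y\mid X)$, so everything hinges on showing $I(C;Y\mid X) = 0$. This is the main (and only nontrivial) obstacle, and it follows from the modeling assumption $Y\sim p(y\mid X)$: conditional on $X$, the label distribution of $Y$ does not further depend on $C$, i.e., $p(y\mid x,c) = p(y\mid x)$. Hence $Y$ and $C$ are conditionally independent given $X$, which gives $I(C;Y\mid X) = 0$.

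Finally, I would combine these pieces: the chain-rule identities together with $I(C;Y\mid X)=0$ yield $I(X;Y) = I(C;Y) + I(X;Y\mid C)$, and exponentiating both sides turns the sum into a product, producing $IS(X;Y) = BCIS(X;Y)\cdot WCIS(X;Y)$. The role of the assumption that $G$ is continuous and that $Z,C$ are independent is only to ensure the random variables and densities are well-defined so that the mutual-information chain rule applies; the independence of $Z$ and $C$ is what allows us to treat $C$ as a genuine random variable alongside $X = G(Z,C)$, but it is not directly used in the algebra beyond guaranteeing that $p(y\mid x,c)=p(y\mid x)$ is a legitimate factorization.
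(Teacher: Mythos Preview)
Your proof is correct and rests on the same key fact as the paper's: since $Y$ depends on $(X,C)$ only through $X$, we have $p_G(y\mid x,c)=p_G(y\mid x)$, which you phrase as $I(C;Y\mid X)=0$. The paper reaches the same identity $I(C;Y)=I(X;Y)-\mathbb{E}_c[I(X_c;Y_c)]$ but by a direct expansion: it writes $I(C;Y)$ as an integral over $x$ via $p_G(y\mid c)=\int p_G(x\mid c)p_G(y\mid x)\,\textnormal{d}x$, then multiplies and divides inside the logarithm by $p_G(y\mid x)$ to split the expression into $I(X;Y)$ minus $\mathbb{E}_c[I(X_c;Y_c)]$. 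In other words, the paper re-derives the relevant instance of the chain rule by hand, whereas you invoke the chain rule $I((X,C);Y)=I(C;Y)+I(X;Y\mid C)=I(X;Y)+I(C;Y\mid X)$ as a known identity and only argue the vanishing of $I(C;Y\mid X)$. Your route is shorter and more conceptual; the paper's is more self-contained and makes the role of $p_G(y\mid x,c)=p_G(y\mid x)$ explicit at the line where it is used. Substantively the arguments coincide.
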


By definition, as with the IS, both BCIS and WCIS lie within $[1,K]$. Since we wish IS to be as large as possible and both BCIS and WCIS lie in the same interval, the theorem asserts that there is a tension between the BCIS and WCIS measures, since both of them cannot be large at the same time. In addition, since both components are larger than $1$, the theorem shows that they both provide a lower bound on the IS and the bound is tight when the other component is equal to $1$. The final realization is that the IS can be very high even when the BCIS component is low, simply by having a high WCIS. This gap between IS and BCIS indicates bad conditional representation which is overlooked by the unconditional evaluation.

On these grounds, we propose the BCIS and WCIS together as the conditional alternative to the IS. Each metric shows a different property of the generated data and, as shown in the theorem, the IS is readily obtained by multiplying the conditional components.

\subsection{Conditional Fr\'{e}chet Inception Distance}\label{CFID}
For conditional FID, we want to measure the distance between different distributions, according to the feature vector $f(x)$, produced by the pre-trained feature extractor $f$ on a sample $x$. Analogous to the conditional IS metrics, we measure the between-class distance between averages of conditioned class features and averages of real class features, as well as the average within-class distance for each matching pair of real and conditioned classes.

BCFID measures the FID between the distribution of the average feature vector of conditioned classes in the generated data and the distribution of the average feature vector of real classes in the class real data. It evaluates the coverage of the conditioned classes over the real classes.

For each distribution specifier $E \in \{R,G\}$, we estimate the per-class mean $\mu^{E}_c$, the mean of means $\mu^{E}_B$, and the covariance of the feature vectors $\Sigma^E_B$.
\begin{align}
&\mu^{E}_c = \mathbb{E}_{x \sim \mathcal{D}^c_E}[f(x)] \\
&\mu^{E}_B = \mathbb{E}_{c \sim \mathcal{D}_C}[\mu^{E}_c] = \mathbb{E}_{x \sim \mathcal{D}_E}[f(x)] = \mu^{E} \\
&\Sigma^E_B = \mathbb{E}_{c\sim \mathcal{D}_C}[(\mu^E_c - \mu_B^E)(\mu^E_c - \mu_B^E)^{\top}]
\end{align}

The BCFID is defined as:
\begin{equation}
\begin{aligned}
BCFID(\mathcal{D}_R,\mathcal{D}_G) :=& \|\mu_B^R - \mu_B^G\|^2 \\
+& \Tr(\Sigma^R_B + \Sigma^G_B - 2(\Sigma^R_B\Sigma^G_B)^\frac{1}{2})
\end{aligned}
\end{equation}

WCFID measures the FID between the distribution of the generated data and the real data within each one of the classes. It evaluates how similar each conditioned class is to its respective real class. The total score is the mean FID within the classes.

For each distribution specifier $E \in \{R,G\}$, the within-class covariance matrices are defined as:
\begin{equation}
\begin{aligned}
\Sigma^E_{W,c} = \mathbb{E}_{x \sim \mathcal{D}^c_E}[(f(x) - \mu^E_{c})(f(x) - \mu^E_{c})^T] \label{eq:covariancewc}
\end{aligned}
\end{equation}

The WCFID is defined as:
\begin{equation}
\begin{aligned}
    &WCFID(\mathcal{D}_R,\mathcal{D}_G) \\
    :=& \mathbb{E}_{c \sim \mathcal{D}_C} [FID(\mathcal{D}^c_{R},\mathcal{D}^c_{G})]  \\
    :=& \mathbb{E}_{c \sim \mathcal{D}_C} \Big[\|\mu^R_{c} - \mu^G_{c}\|^2 + \\
     & \quad\quad\quad \Tr(\Sigma^R_{W,c} + \Sigma^G_{W,c} - 2(\Sigma^R_{W,c}\Sigma^G_{W,c})^\frac{1}{2})\Big] \\
\end{aligned}
\end{equation}

Note that we compare between matching pairs of conditioned and real classes. When a mapping between conditioned and real classes exists, i.e., in conditional GANs, this is straightforward. In the case when there is no such mapping, i.e., in the class discovery case, such as when employing the InfoGAN method, a mapping needs to be created. For example, this can be done by using a classifier to get the prediction probabilities for the generated images. Then average the probabilities for each conditioned class and apply the Hungarian algorithm on the average probabilities.

In general, the desire is to minimize both component, since each computes a different aspect of the distance between the real and the generated distributions. {\color{black}BCFID measures the distance between the real and the fake class averages, therefore it measures the coverage of the classes. WCFID on the other hand, measures the distance between the real and fake samples within each class. Therefore, it measures the similarity and diversity. This means that the two components complement each other.}

The following theorem ties the FID and the conditional FID components.
\begin{theorem}\label{thm:fid} 
Let $\mathcal{D}_R$ and $\mathcal{D}_G$ be the distributions of real and generated samples. Then,
\begin{equation}\label{eq:FIDeqCFID}
FID(\mathcal{D}_R,\mathcal{D}_G) \leq BCFID(\mathcal{D}_R,\mathcal{D}_G) + WCFID(\mathcal{D}_R,\mathcal{D}_G)
\end{equation}
and the bound is tight under certain conditions. 
The proof is provided in the appendix.
\end{theorem}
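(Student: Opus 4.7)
The plan is to establish the inequality via the variational (coupling) characterization of the Fréchet distance between Gaussians, viewing $FID$ as a 2-Wasserstein quantity. The key analytic tool is the \emph{Gelbrich bound}: for any random vectors $X, Y$ on $\mathbb{R}^n$ with marginal means $\mu_X, \mu_Y$ and marginal covariances $\Sigma_X, \Sigma_Y$ but arbitrary joint distribution,
\begin{equation*}
d^2(\mathcal{N}(\mu_X,\Sigma_X), \mathcal{N}(\mu_Y,\Sigma_Y)) \leq \mathbb{E}\bigl[\|X - Y\|^2\bigr].
\end{equation*}
Expanding the right-hand side as $\|\mu_X - \mu_Y\|^2 + \Tr(\Sigma_X + \Sigma_Y) - 2\Tr(\Sigma_{XY})$ with cross-covariance $\Sigma_{XY} := \mathbb{E}[(X - \mu_X)(Y - \mu_Y)^\top]$, this reduces to the standard trace inequality $\Tr(\Sigma_{XY}) \leq \Tr((\Sigma_X^{1/2}\Sigma_Y\Sigma_X^{1/2})^{1/2})$ on cross-covariances. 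Crucially, the bound depends only on the marginal first and second moments of $X$ and $Y$, not on their full laws.

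Next I would construct an explicit coupling of $\mathcal{D}_R$ and $\mathcal{D}_G$. Sample $C \sim \mathcal{D}_C$, and conditional on $C = c$ draw $(\tilde X, \tilde Y)$ from the optimal Gaussian coupling of $\mathcal{N}(\mu^R_c, \Sigma^R_{W,c})$ and $\mathcal{N}(\mu^G_c, \Sigma^G_{W,c})$, which realizes the within-class Fréchet distance $FID(\mathcal{D}^c_R, \mathcal{D}^c_G)$. The laws of total expectation and total covariance yield $\mathbb{E}[\tilde X] = \mathbb{E}_c[\mu^R_c] = \mu^R$ and $\mathrm{Cov}(\tilde X) = \mathbb{E}_c[\Sigma^R_{W,c}] + \Sigma^R_B = \Sigma^R$; analogous identities hold for $\tilde Y$. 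Applying the Gelbrich bound to $(\tilde X, \tilde Y)$ then gives
\begin{equation*}
FID(\mathcal{D}_R, \mathcal{D}_G) \leq \mathbb{E}\bigl[\|\tilde X - \tilde Y\|^2\bigr] = \mathbb{E}_c\bigl[FID(\mathcal{D}^c_R, \mathcal{D}^c_G)\bigr] = WCFID(\mathcal{D}_R, \mathcal{D}_G),
\end{equation*}
and since $BCFID(\mathcal{D}_R, \mathcal{D}_G) \geq 0$ (itself a Fréchet distance), the stated inequality follows. For tightness I would identify regimes where the within-class optimal couplings are also jointly optimal for the aggregated Gaussian approximation, e.g., when $\Sigma^R_B$ and $\Sigma^G_B$ agree and the class means align so that $BCFID = 0$.

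The main obstacle is conceptual: the constructed pair $(\tilde X, \tilde Y)$ is \emph{not} a coupling of the true distributions $\mathcal{D}_R$ and $\mathcal{D}_G$, since its marginals are Gaussian mixtures whose first two moments match those of $\mathcal{D}_R, \mathcal{D}_G$ but whose higher moments need not. The reason this mismatch is harmless is precisely that $FID$ is defined through the Gaussian approximations of its arguments, so the Gelbrich bound, which only inspects means and covariances, still provides a valid upper bound on $FID(\mathcal{D}_R, \mathcal{D}_G)$. A secondary technical step is verifying the total-covariance decomposition $\Sigma^E = \Sigma^E_B + \mathbb{E}_c[\Sigma^E_{W,c}]$ directly from the paper's definitions of the between- and within-class covariances; this is routine but must be done cleanly before the marginal check in the coupling argument goes through.
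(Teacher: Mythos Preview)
Your argument is correct, and in fact it proves the sharper inequality $FID(\mathcal{D}_R,\mathcal{D}_G) \leq WCFID(\mathcal{D}_R,\mathcal{D}_G)$, from which the theorem follows trivially since $BCFID \geq 0$. The paper proceeds differently: it expands everything algebraically, uses the law-of-total-covariance identity $\Sigma^E = \Sigma^E_B + \mathbb{E}_c[\Sigma^E_{W,c}]$ (which you also flag as a needed ingredient), and writes $BCFID + WCFID - FID$ as an explicit remainder $M$, then argues $M \geq 0$ via a Jensen-type trace inequality applied to the convex function $(x_1,x_2) \mapsto (x_1^{1/2}-x_2^{1/2})^2$. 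Your coupling-plus-Gelbrich route sidesteps all of that matrix algebra: by building a joint law whose marginals match the first two moments of $\mathcal{D}_R$ and $\mathcal{D}_G$ and whose expected squared distance is exactly $WCFID$, Gelbrich's lower bound on $W_2^2$ delivers the inequality in one line. The payoff of your approach is a cleaner and strictly stronger statement; the payoff of the paper's approach is an explicit formula for the slack $M$, which in principle tells you \emph{how} far the bound is from tight and lets them read off the equality conditions directly. Note, incidentally, that your stronger inequality $FID \leq WCFID$ implies the paper's bound can only be tight when $BCFID = 0$, and indeed the tightness example in the appendix has $\Sigma^E_B = 0$ and $\mu^R_c = \mu^G_c$ for all $c$.
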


By this theorem, in conditional generation, FID gives an optimistic evaluation to the model that ignores bad cases. A good unconditional score can be obtained even though there is a considerable friction between the real and generated distributions in terms of conditional generation. This friction can occur either by bad representation of classes (high BCFID) or unmatching diversity within classes (high WCFID). For this reason, we propose the BCFID and WCFID as the conditional alternative to the FID. In addition to providing two meaningful scores that are similarly scaled, an upper bound to the FID can be computed by adding the two components. {\color{black}Since a good conditional generation corresponds to the case where both BCFID and WCFID tend to be small, we suggest using $BCFID(\mathcal{D}_R,\mathcal{D}_G) + WCFID(\mathcal{D}_R,\mathcal{D}_G)$ as a single-valued measure of conditional generation. }

{\color{black}
\subsection{Within-class Model Analysis}
We note that the WCIS and WCFID, proposed in this work, reduce the evaluations on the various classes into single values. While this is beneficial to summarize the performance of a model as much as possible, this misses an opportunity to inspect the performance of the model on each class separately. By looking at the IS/FID component of each class before the averaging into the WCIS/WCFID, we can reveal which classes contribute to the performance of the model and which classes are not generated well. This can be a valuable insight during training and fine-tuning of models. In Sec.~\ref{biggan}, we perform such in-depth analysis with the BigGAN~\cite{brock2018large} on the ImageNet~\cite{russakovsky2015imagenet} dataset.
}

\begin{figure*}[t]
\centering
\begin{tabular}{cc}
\multicolumn{2}{c}{
\includegraphics[width=0.90\linewidth, trim={170 280 140 250}, clip]{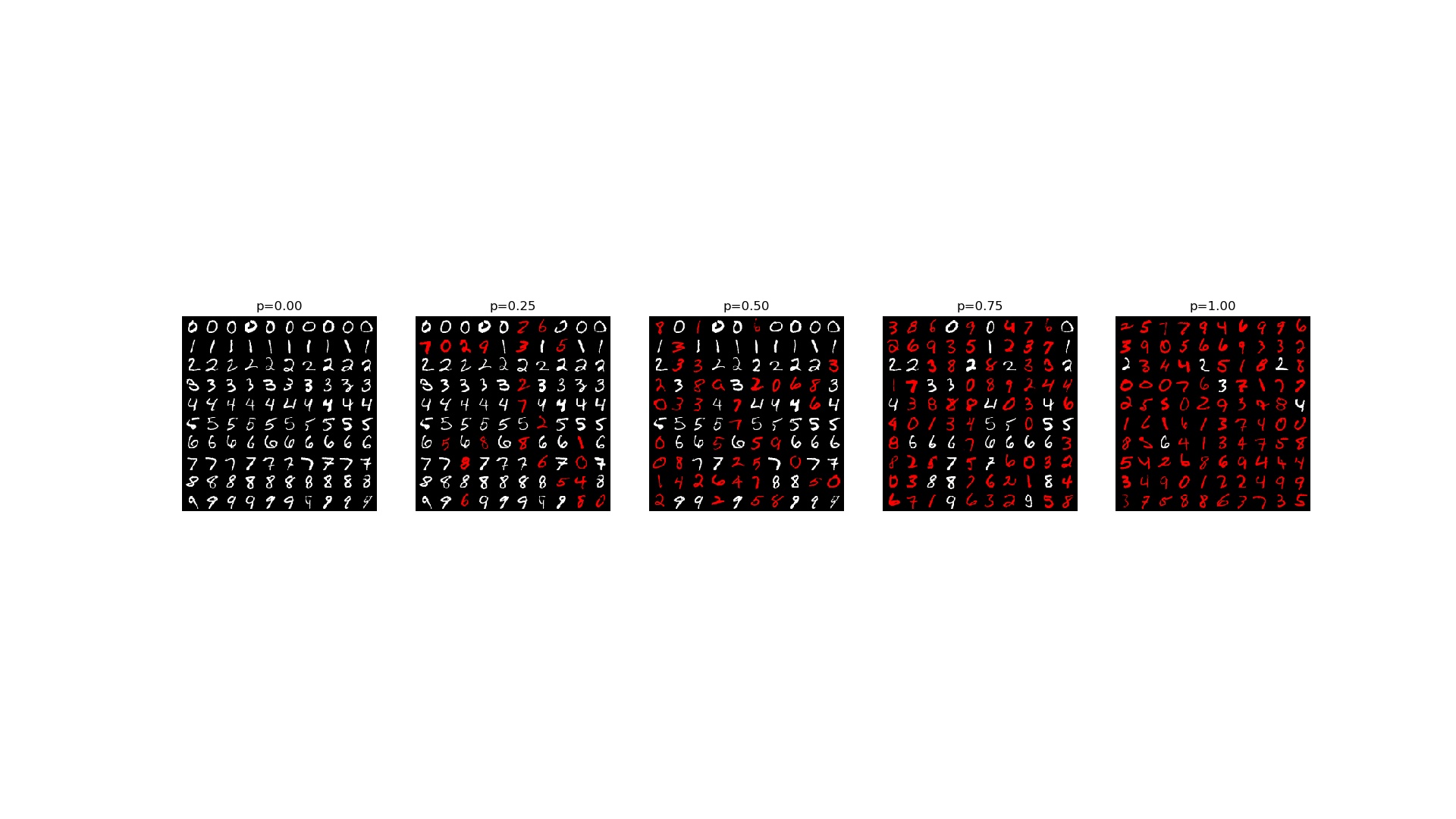}} \\
\multicolumn{2}{c}{(a)} \\
\includegraphics[width=.45\linewidth, trim={10 0 40 10}, clip]{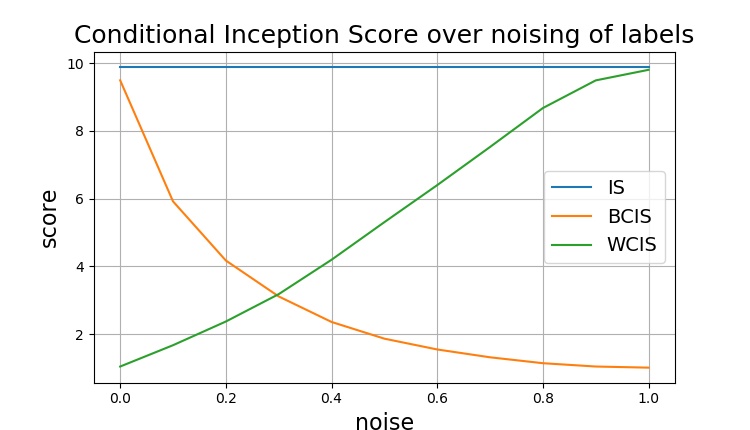} &
\includegraphics[width=.45\linewidth, trim={10 0 40 10}, clip]{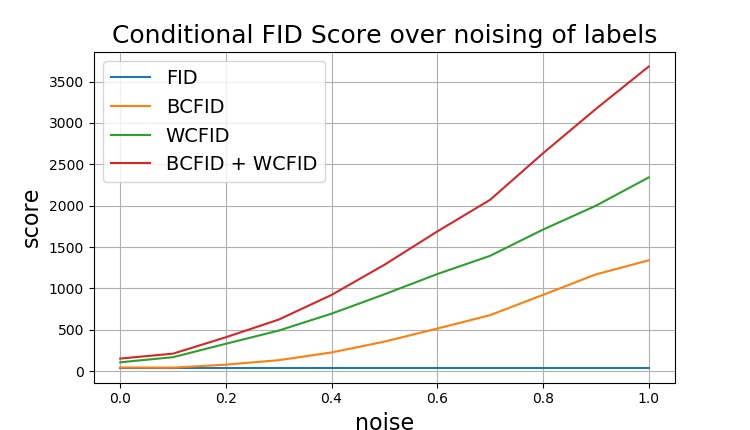} \\
(b)&(c)\\
\end{tabular}
\caption{{\bf Label Noising:} Labels were randomly replaced with probability $p$ to simulate bad conditional generation. (a) Each row has images conditioned on the same class. Images in red indicate bad conditional generation. (b) The effect of label noising on the unconditional and conditional IS metrics as a function of noise. (c) Same for the conditional FID score.}
\label{fig:label_noising}
\end{figure*}

\begin{figure*}[t]
\centering
\begin{tabular}{ccc}
Gaussian & Salt \& Pepper & Permutation\\
\includegraphics[width=0.3045\linewidth, trim={0 0 2 0}, clip]{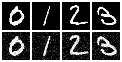} &
\includegraphics[width=0.3045\linewidth, trim={0 0 2 0}, clip]{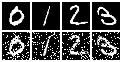} &
\includegraphics[width=0.3045\linewidth, trim={0 0 2 0}, clip]{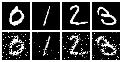} \\
\includegraphics[width=0.3045\linewidth, trim={10 0 52 37}, clip]{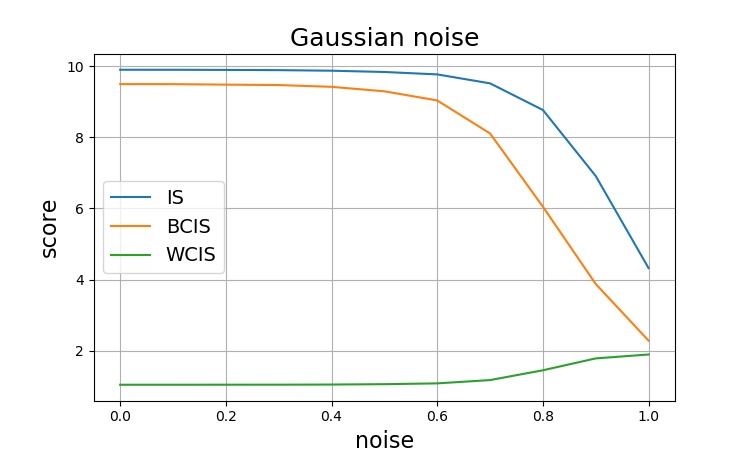} &
\includegraphics[width=0.3045\linewidth, trim={10 0 52 37}, clip]{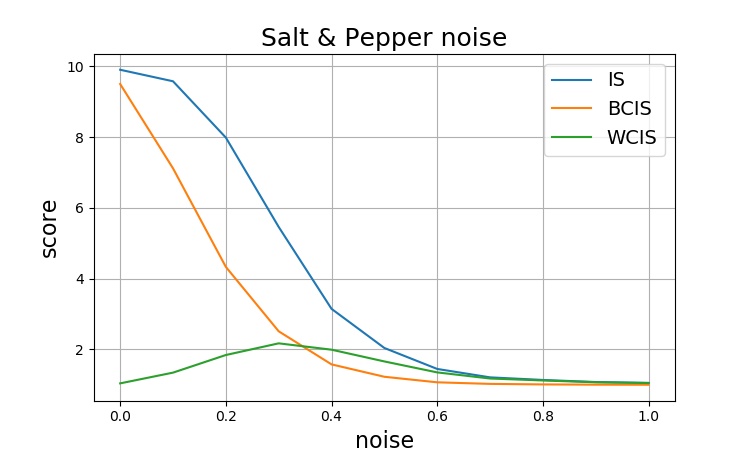} &
\includegraphics[width=0.3045\linewidth, trim={10 0 52 37}, clip]{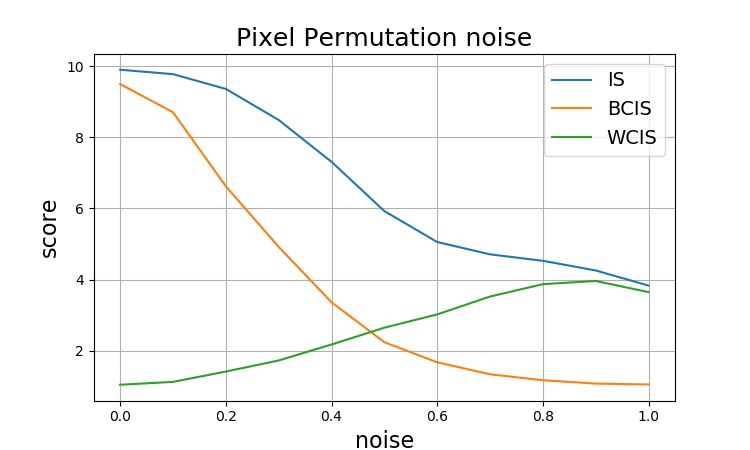} \\
\includegraphics[width=0.3045\linewidth, trim={10 0 52 37}, clip]{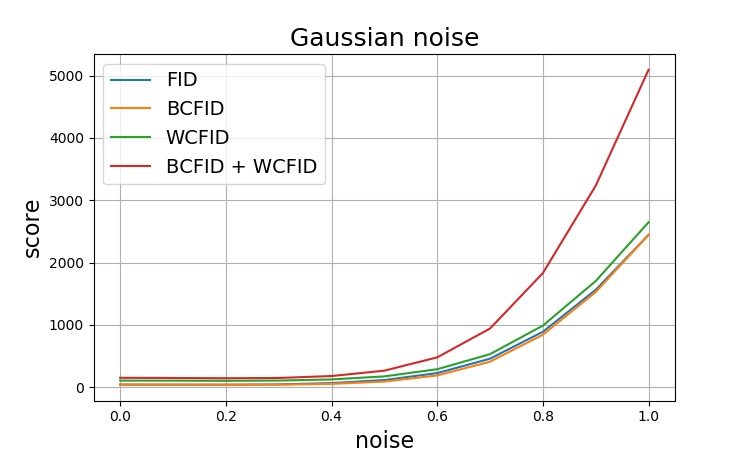} &
\includegraphics[width=0.3045\linewidth, trim={10 0 52 37}, clip]{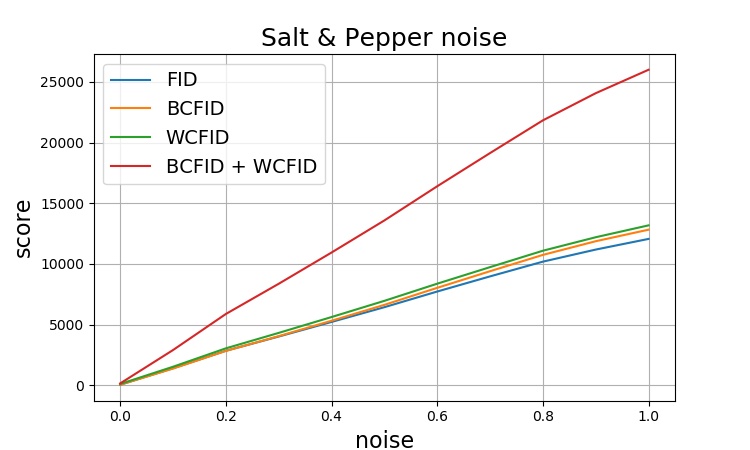} &
\includegraphics[width=0.3045\linewidth, trim={10 0 52 37}, clip]{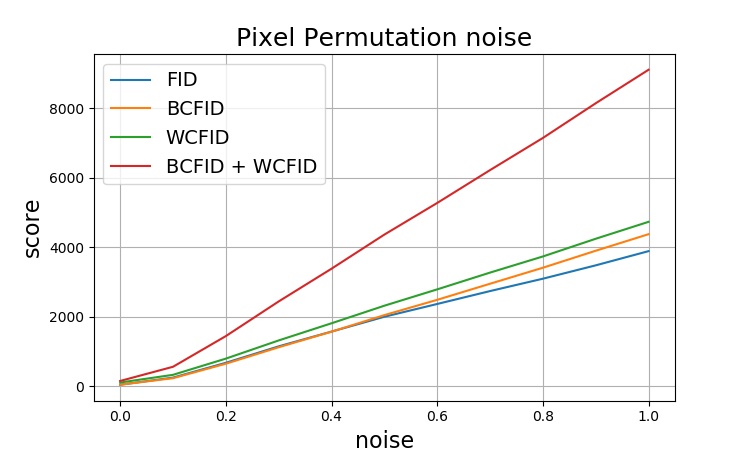} \\
(a)&(b)&(c)\\
\end{tabular}
\caption{{\bf Image noising:} The effect of various noises on the unconditional and conditional metrics over an increasing magnitude. (a) Gaussian noise. (b) Salt \& Pepper noise. (c) Random pixel permutation. (top) Example images before and after the application of each noise, (middle) The effect on the unconditional and conditional Inception Score due to varying noise intensity, (bottom) The effect on the unconditional and conditional FID due to varying noise intensity.}
\label{fig:image_noising_plot}
\end{figure*}

\begin{figure*}[t]
\centering
\begin{tabular}{ccc}
Blur & Swirl & PCA \\
\includegraphics[width=0.3045\linewidth, trim={0 0 2 0}, clip]{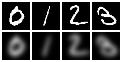} &
\includegraphics[width=0.3045\linewidth, trim={0 0 2 0}, clip]{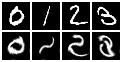} &
\includegraphics[width=0.3045\linewidth, trim={0 0 2 0}, clip]{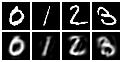} \\
\includegraphics[width=0.3045\linewidth, trim={10 0 52 40}, clip]{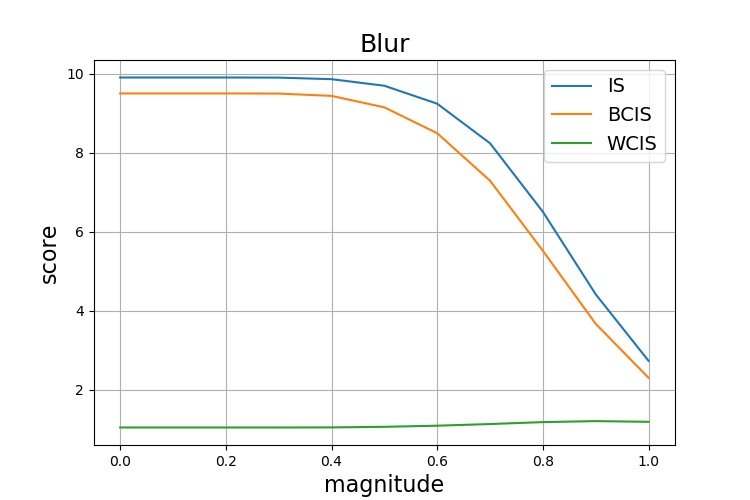} &
\includegraphics[width=0.3045\linewidth, trim={10 0 52 40}, clip]{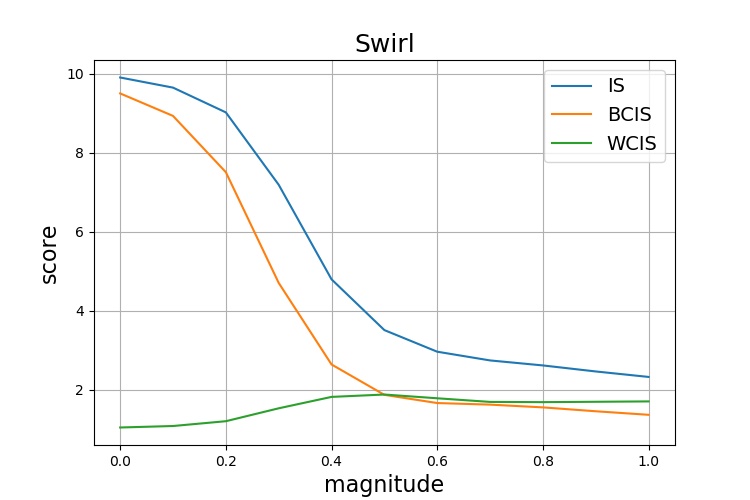} &
\includegraphics[width=0.3045\linewidth, trim={10 0 52 40}, clip]{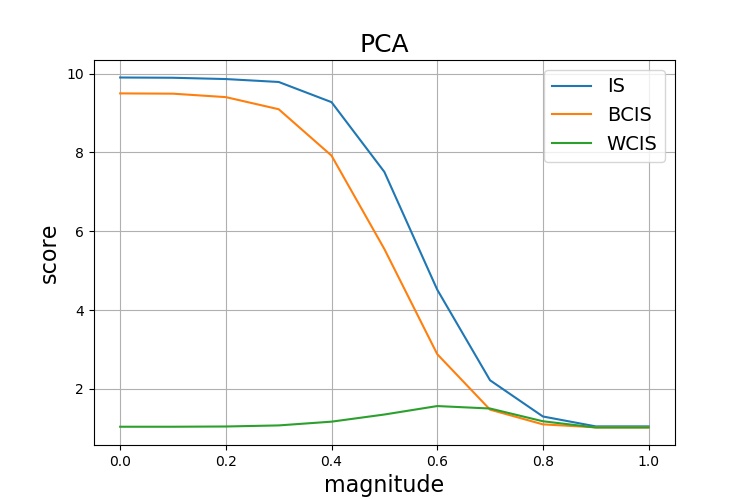} \\
\includegraphics[width=0.3045\linewidth, trim={10 0 52 40}, clip]{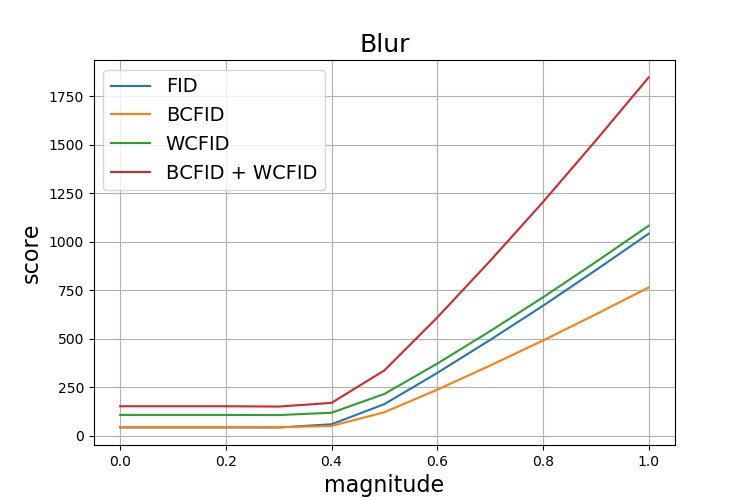} &
\includegraphics[width=0.3045\linewidth, trim={10 0 52 40}, clip]{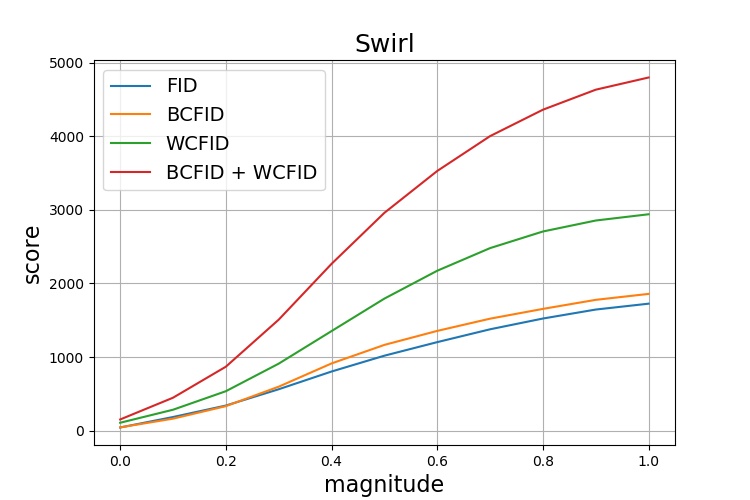} &
\includegraphics[width=0.3045\linewidth, trim={10 0 52 40}, clip]{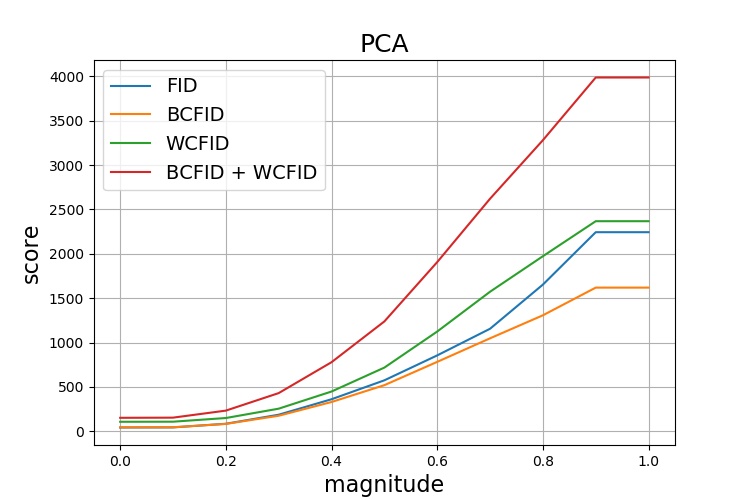} \\
(a)&(b)&(c)\\
\end{tabular}
\caption{{\color{black}{\bf Image manipulation:} The effect of various manipulations on the unconditional and conditional metrics over an increasing magnitude. (a) Gaussian blur filter. (b) Swirl effect applied on the image. (c) Dimensionality reduction with PCA. (top) Example images before and after the application of each manipulation, (middle) The effect on the unconditional and conditional Inception Score due to varying manipulation intensity, (bottom) The effect on the unconditional and conditional FID due to varying manipulation intensity.}}
\label{fig:image_manipulation_plot}
\end{figure*}

\begin{figure*}[t]
\centering
\begin{tabular}{cc}
    \begin{tabular}{c}
    \includegraphics[width=0.45\linewidth, trim={13 0 52 5}, clip]{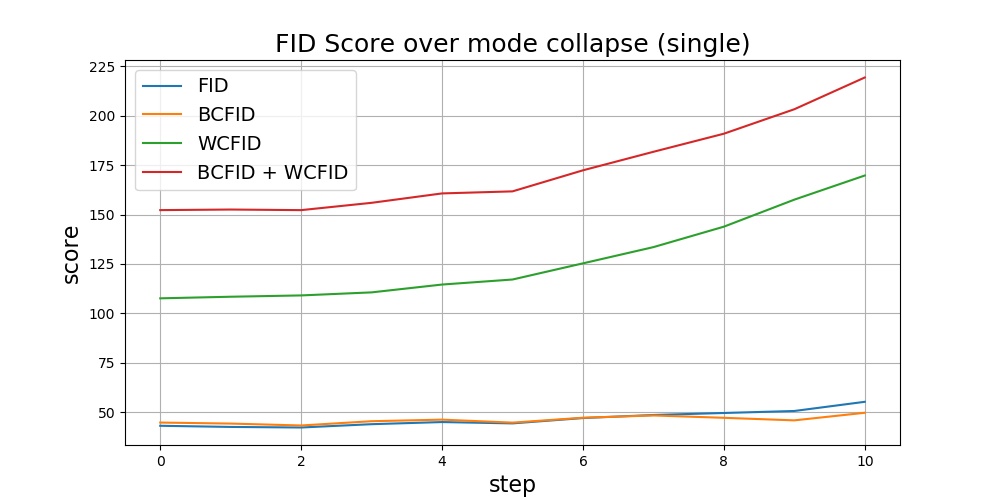} \\
    (a)\\
    \end{tabular} 
    \begin{tabular}{c}
    \includegraphics[width=0.45\linewidth, trim={13 0 52 5}, clip]{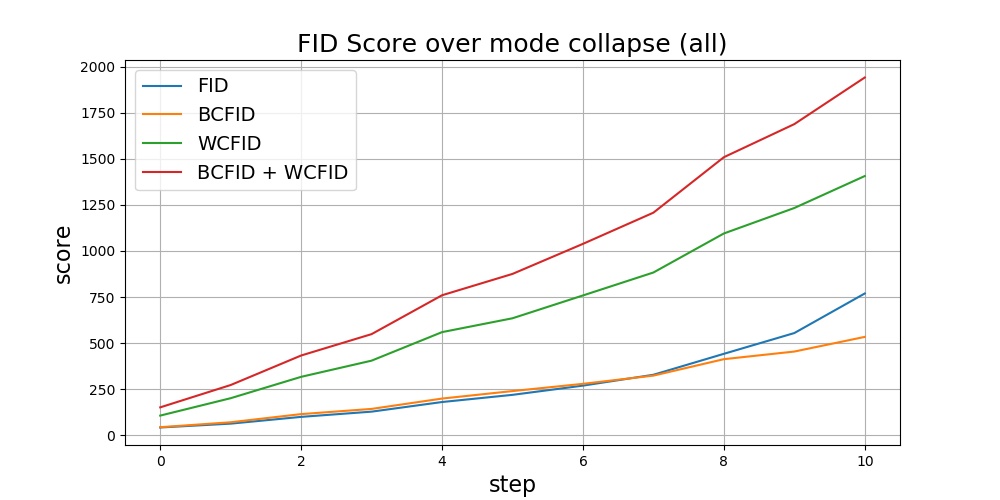} \\
    (b)\\
    \end{tabular} 
\end{tabular}
\caption{{\bf Mode collapse:} The effect of mode collapse on the unconditional and conditional IS and FID metrics over an increasing severity. (a): Gradual mode collapse on a single class. (b): Incremental full mode collapse on all classes.}
\label{fig:mode_collapse_plot}
\end{figure*}

\begin{figure*}
    \centering
    \includegraphics[width=\textwidth]{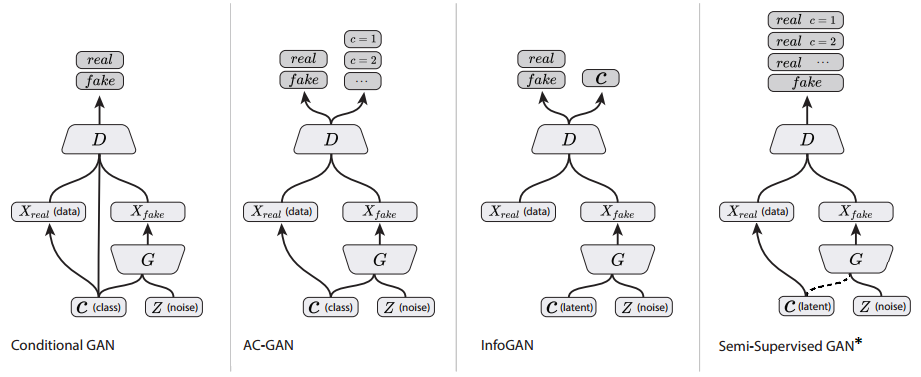}
    \caption{{\color{black}Abstract depiction of CGAN, InfoGAN, ACGAN, and our modified SGAN. The modification of SGAN is visualized with the dotted line. Figure is a modified version from https://github.com/clvrai/ACGAN-PyTorch}}
    \label{fig:models}
\end{figure*}

\section{Experiments}\label{experiments}

Our experiments employ three datasets: MNIST~\cite{lecun}, CIFAR10~\cite{cifar}, and ImageNet~\cite{russakovsky2015imagenet}. We first consider controlled simulations on MNIST to show the behavior of our metrics compared to existing unconditional metrics. Three cases are considered: noisy labels, noisy images, and mode collapse within classes. We then consider our metrics on a variety of well-established generative models and draw visual insights for the reported metric scores. Finally, a user study was held to compare the numeric results to human perception.

\noindent{\bf Evaluation procedure}
When evaluating the models, we use an equal number of randomly sampled real and generated samples for each class. For MNIST and CIFAR10, the test set was used as real samples, with $1000$ samples from each class. For ImageNet, $50$ validation samples for each class were used, for a total of $50,000$ validation samples.  

To obtain the scores of the `Real Data' in Tab.~\ref{tab:gansevaluation},~\ref{tab:biggan} (i.e., the score obtained not from generating but from the training data itself, which serves as an unofficial upper bound of the performance), an equal number of samples were taken from the train set. For instance, for MNIST, $10,000$ samples were taken from the train data ($1000$ for each class). These same samples were also used for the synthetic experiments of noise and mode collapse where they undergo various augmentations. 

For each dataset, we applied a pretrained classifier, to give class probabilities for calculating the Inception Scores, and as a feature extractor, to calculate the FID scores. For ImageNet, we used the InceptionV3~\cite{Szegedy2015RethinkingTI} architecture, as used in the original formulation of the IS~\cite{salimans2016improved} and FID~\cite{heusel2017gans}. For CIFAR10, we used the VGG-16~\cite{simonyan2014very} architecture, and for MNIST, a classifier with two convolutional blocks and two fully connected layers. The test accuracy is 99.06\% for MNIST, 85.20\% for CIFAR10 and 77.45\% for ImageNet. 

The activations of the last hidden layer (a.k.a the penultimate layer) were employed as the extracted features $f(x)$. The feature dimension is $128$ for MNIST, $512$ for CIFAR10 and $2048$ for ImageNet. 

Note that since the classification and feature extraction differs between each dataset, model scores should be compared per dataset, and not between datasets.

{\color{black}
\noindent{\bf Dealing with low-rank covariance matrices\quad}
In the unconditional setting, the estimation of the covariance matrix is done with a large set of images. However, the number of samples can be greatly reduced in the conditional setting. If the number of classes is small (e.q. MNIST, CIFAR10), the covariance matrices $\Sigma_B^R, \Sigma_B^G$ are likely to be of low rank. When the number of samples per class is small (e.q. ImageNet), this applies to $\Sigma_{W,c}^R, \Sigma_{W,c}^G$. We have found that, especially for WCFID, this can results in very unstable results.
As a compromise, instead of measuring the FID on the entire feature space, we randomly select as set of features and perform the covariance computation in a  sub-space that meets the limitations of the number of classes and the number of samples per class. We run this evaluation 100 times in order to reduce the variance caused by the sampling. For ImageNet, we randomly select $50$ features from the $2048$ feature vector, compute the FID score using these features, and finally average the FID scores of all trials to get a final FID score. For MNIST and CIFAR10 this was done with $10$ features at a time.

Because the FID scales with the number of dimensions used in the measurement, we normalize the measured FID (and also for the BCFID and WCFID) by the number of dimensions.
}

\subsection{Synthetic Experiments}
\label{sec:simulations}

\noindent{\bf Label noising} Label noising is the process of assigning random labels to some of the images, instead of their ground truth labels. This process simulates different levels of adherence to the conditional input. To maintain an equal number of images per conditioned class, instead of simply re-selecting a random class, we performed a random permutation of a subset of the images proportionally to a parameter $p \in [0,1]$. When $p=0$ no noising was applied and when $p=1$ all image labels were randomly permuted. Fig.~\ref{fig:label_noising} shows how label noising simulates decline in conditional generation performance. In Fig.~\ref{fig:label_noising}(a) each row of each subfigure represents a conditioned class, the red images highlight when the conditional generation fails. When setting $p=0$, all images are correctly generated on their conditional input and as $p$ increases, more images are incorrectly generated. 

In Fig.~\ref{fig:label_noising}(b) and (c), the IS and FID metrics and our proposed conditional variants are presented under the effect of label noising. The plots depict a number of interesting behaviors. First, the unconditional IS and FID remain constant across the experiment. That is because these metrics do not consider any conditional requirements from the generated images, and the unconditional performance has remained the same. Second, label noising has a dramatic effect on the conditional IS and FID metrics. The BCIS, which evaluates both the consistency of each condition in the target classes and the coverage of the target classes falls immediately due to the declining consistency in the conditioned images. The WCIS, on the other hand, which measures inconsistency, shows a rapid increase as a compensation of the decline of the BCIS score. All conditional components of the FID increase, since the label noise inflicts a shift in the distribution within each class and on the class averages.

\noindent{\bf Image noising}
To measure if the conditional metrics are also sensitive to unconditional degradation in quality, we applied three types of unconditional noise on the images and compared the effect on the scores. The noise was applied with increasing magnitude $p$ between $[0,1]$. We applied Gaussian noise with mean $0$ and variance $p$, salt \& pepper noise with probability $p$ per pixel, and random pixel permutation with probability $p$. 
Fig.~\ref{fig:image_noising_plot} shows the IS and FID with the conditional scores. 
For IS, the BCIS declines more rapidly than the IS, making it more sensitive to image quality. This is matched with an increase of the WCIS, which defines the gap between BCIS and IS, due to the trade-off shown in Eq.~\ref{eq:ISeqCIS}. The WCIS provides a support for the IS which gives a false sense of generation quality, best seen during pixel permutation. For FID, the conditional metrics have the same trend as the unconditional one. With the gap between the FID and the conditional FID sum increasing with the level of noise.

{\color{black}
\noindent{\bf Image manipulation\quad}
Image noise is often found in real images due to the circumstances under which the images were taken. However, it is not very prominent in generated images. Other artifacts, such as image blur, structure deformation, and compression loss, are usually more observable in generated images. In order to simulate that, we applied three different manipulations on the images and compared their effect on the scores once more. Again, the applied manipulation was done with an increasing magnitude $p \in [0,1]$. We applied Gaussian filter (blur) with $\sigma=5p$, swirl effect with a rotation of $\theta=\pi p \frac{|r-R|}{R}$ around the center ($r$ is the distance from the center for each pixel and R is the max distance in a single axis), and dimensionality reduction with PCA with $\lfloor K ^ {(1-p)} \rfloor$ components ($K=764$ is the number of pixels).

Fig.~\ref{fig:image_manipulation_plot} shows the IS and FID with the conditional scores. 
We have found the same behaviour as with `image noising' (Fig.~\ref{fig:image_noising_plot}). For IS, there is a constant decline in the BCIS, and an increase in WCIS dampens the decrease of the unconditional IS. For FID, all metrics follow the same trend and there is an increasing gap between the FID and the conditional FID sum as the effect grows in magnitude.
}

\begin{figure*}[t]
\centering
\begin{tabular}{l@{~}c@{~}c@{~}c@{~}c}
& CGAN & ACGAN & InfoGAN & SGAN \\
\includegraphics[width=0.06\textwidth]{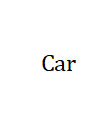} &
\includegraphics[width=0.22\textwidth]{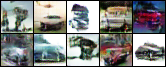} &
\includegraphics[width=0.22\textwidth]{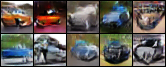} &
\includegraphics[width=0.22\textwidth]{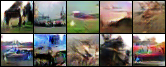} &
\includegraphics[width=0.22\textwidth]{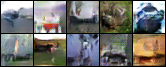} \\
\includegraphics[width=0.06\textwidth]{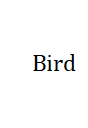} &
\includegraphics[width=0.22\textwidth]{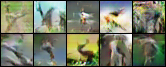} &
\includegraphics[width=0.22\textwidth]{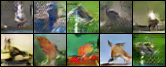} &
\includegraphics[width=0.22\textwidth]{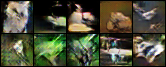} &
\includegraphics[width=0.22\textwidth]{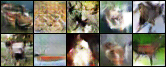} \\
\includegraphics[width=0.06\textwidth]{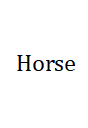} &
\includegraphics[width=0.22\textwidth]{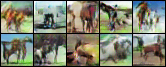} &
\includegraphics[width=0.22\textwidth]{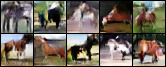} &
\includegraphics[width=0.22\textwidth]{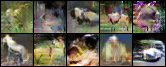} &
\includegraphics[width=0.22\textwidth]{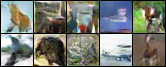} \\
\includegraphics[width=0.06\textwidth]{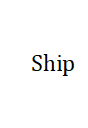} &
\includegraphics[width=0.22\textwidth]{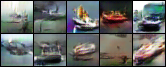} &
\includegraphics[width=0.22\textwidth]{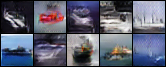} &
\includegraphics[width=0.22\textwidth]{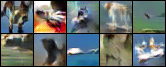} &
\includegraphics[width=0.22\textwidth]{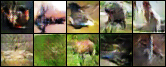} \\
&(a)&(b)&(c)&(d) \\
\end{tabular}
\caption{Illustrations for CIFAR10. Conditional generation of (a) CGAN, (b) ACGAN (c) InfoGAN and (d) SGAN.}
\label{fig:cifar}

\hfill \\

\begin{tabular}{l@{~}c@{~}c@{~}c@{~}c}
& CGAN & ACGAN & InfoGAN & SGAN \\
\includegraphics[width=0.06\textwidth]{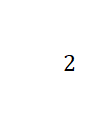} &
\includegraphics[width=0.22\textwidth]{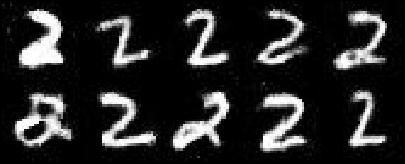} &
\includegraphics[width=0.22\textwidth]{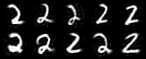} &
\includegraphics[width=0.22\textwidth]{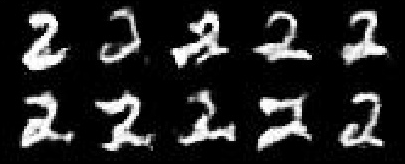} &
\includegraphics[width=0.22\textwidth]{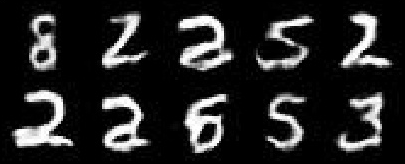} \\
\includegraphics[width=0.06\textwidth]{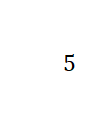} &
\includegraphics[width=0.22\textwidth]{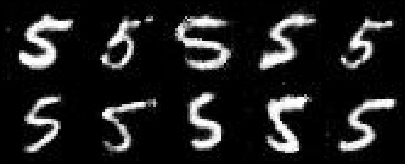} &
\includegraphics[width=0.22\textwidth]{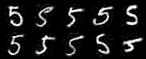} &
\includegraphics[width=0.22\textwidth]{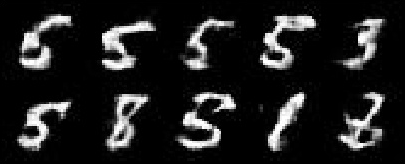} &
\includegraphics[width=0.22\textwidth]{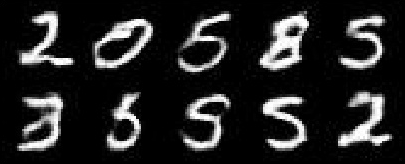} \\
\includegraphics[width=0.06\textwidth]{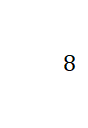} &
\includegraphics[width=0.22\textwidth]{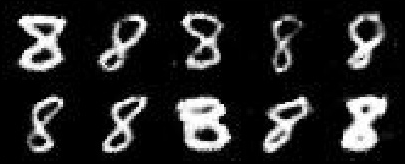} &
\includegraphics[width=0.22\textwidth]{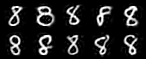} &
\includegraphics[width=0.22\textwidth]{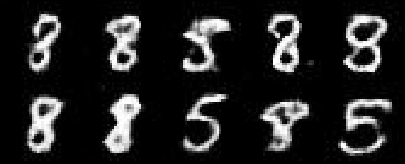} &
\includegraphics[width=0.22\textwidth]{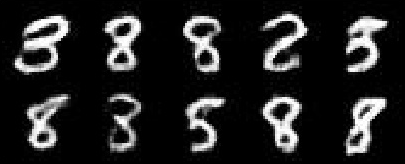} \\
\includegraphics[width=0.06\textwidth]{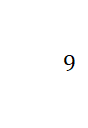} &
\includegraphics[width=0.22\textwidth]{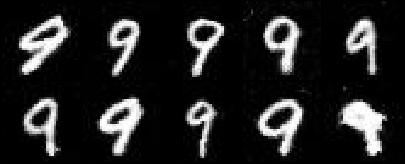} &
\includegraphics[width=0.22\textwidth]{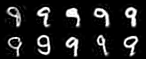} &
\includegraphics[width=0.22\textwidth]{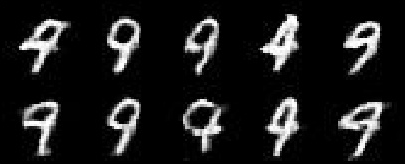} &
\includegraphics[width=0.22\textwidth]{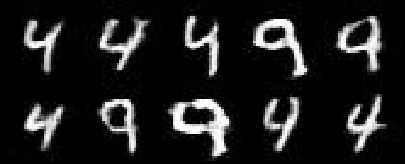} \\
&(a)&(b)&(c)&(d) \\
\end{tabular}
\caption{Illustrations for MNIST. Conditional generation of (a) CGAN, (b) ACGAN (c) InfoGAN and (d) SGAN.}
\label{fig:mnist}
\end{figure*}

\begin{table*}[t]
\caption{Unconditional and conditional metrics on CIFAR10 and MNIST for different conditional GANs. 
$\downarrow$ indicates that a lower value is better and $\uparrow$ otherwise.}
\label{tab:gansevaluation}
\centering
\begin{tabular}{l@{~~}lc@{~}c@{~}cc@{~}c@{~}ccc@{~}c@{~}c}
\toprule
& & \multicolumn{7}{c}{Evaluation metrics} & \multicolumn{3}{c}{User study}\\
\cmidrule(lr){3-9}
\cmidrule(lr){10-12}
&        & FID $\downarrow$    & WCFID 	$\downarrow$ & BCFID 	$\downarrow$ & IS 	$\uparrow$   & WCIS 	$\downarrow$ & BCIS 	$\uparrow$ & Acc 	$\uparrow $ & Quality	$\uparrow $ & Diversity 	$\uparrow $ & Class	$\uparrow $\\  
\midrule
\parbox[t]{2mm}{\multirow{5}{*}{\rotatebox[origin=c]{90}{CIFAR10~}}}
& Real Data             & 0.02  & 0.09  & 0.04  & 8.45  & 1.14  & 7.41  & 91.15 & - & - & - \\
\cmidrule(lr){2-12}

& CGAN                  & 4.31  & 7.14  & 6.30  & 7.10  & 1.22  & 5.78  & 69.51 & 7.6 & 7.5 & 7.0 \\
& ACGAN                 & 4.23  & 6.84  & 5.25  & 7.01  & 1.10  & 6.36  & 75.43 & 7.5 & 7.4 & 8.0 \\
& InfoGAN               & 5.17  & 14.59 & 10.13 & 6.52  & 2.01  & 3.25  & 65.69 & 4.1 & 6.9 & 4.2 \\
& SGAN                  & 4.46  & 17.07 & 12.34 & 6.84  & 2.52  & 2.71  & 11.27 & 5.3 & 7.2 & 2.5 \\

\midrule
\parbox[t]{2mm}{\multirow{5}{*}{\rotatebox[origin=c]{90}{MNIST~}}}
& Real Data             & 19.86 & 35.99     & 34.17     & 9.86  & 1.04  & 9.52  & 99.61 & - & - & - \\
\cmidrule(lr){2-12}
& CGAN                  & 36.67 & 98.48     & 25.70     & 9.87  & 1.06  & 9.31  & 98.90 & 8.3 & 8.8 & 7.6  \\
& ACGAN                 & 30.13 & 91.21     & 25.95     & 9.74  & 1.09  & 8.93  & 98.30 & 8.5 & 9.0 & 9.1 \\
& InfoGAN               & 76.73 & 321.56    & 93.51     & 9.38  & 1.33  & 7.03  & 89.83 & 5.3 & 6.0 & 6.3 \\
& SGAN                  & 69.34 & 609.48    & 289.42    & 8.87  & 2.03  & 4.37  & 73.34 & 6.0 & 3.5 & 5.3 \\

\bottomrule
\end{tabular}
\end{table*}

\noindent{\bf Mode collapse} 
Mode collapse occurs when the model fails to generalize on the distribution of the target dataset and collapses to represent only a portion of the distribution. It is a common failure of generative models, which occurs when the model $G$ generates similar images for many different initial priors $z$. In the conditional setting, the collapse can be more specific and occur only within a specific class.

{\color{black}
To simulate mode collapse, we performed the evaluation 11 times (steps). In each step, we sub-sampled the subset of the collapsed classes so that the remained set is 2/3 of its previous size. This leads to the final step having a pool size of less than 2\% of the original size. We then randomly selected 100 images from each class and measured the scores on the selected images.
}

Fig.~\ref{fig:mode_collapse_plot} shows how the unconditional and conditional FID metrics react to the collapse.
(a) shows a single class collapse in where in each step the diversity in that class gradually declines. (b) shows all of the classes fully collapse one by one at each step. Our metric is more sensitive to mode collapse, both when it occurs in a single class or in multiple classes. {\color{black}As can be seen, the mode collapse is the most noticable in WCFID, and is less detectable in BCFID. This is because the collapse reduces the variance inside each class, but does not change the class average very much. This results is well aligned with the observation in Sec.~\ref{CFID} that BCFID measures the coverage in the representation of the classes, while WCFID is more sensitive the the diversity inside each class. We have found the FID to be as sensitive to mode collapse as BCFID, which means that it is less sensitive to mode collapse than WCFID when it occurs in specified classes only.}

No evaluation on the unconditional and conditional IS was performed in this setting, since they both cannot detect mode collapse.

\subsection{Model Comparison}
We next evaluate the performance of various pretrained conditional GAN models on different datasets. 
In Tab.~\ref{tab:gansevaluation}, for CIFAR10 and MNIST, we consider CGAN~\cite{mirza2014conditional}, ACGAN~\cite{odena2017conditional}, InfoGAN~\cite{chen2016infogan}, and SGAN~\cite{odena2016semi}.
Note that for SGAN, the generator is not class conditioned, and so we modified the generator to accept both noise and class label as input, and the adversarial loss was applied on the conditioned class. {\color{black}The conceptual differences between the models can be seen in Fig.~\ref{fig:models}.}

For conditional generation, there are four extreme cases: (i) good unconditional and good conditional generation, (ii) bad unconditional and bad conditional generation, (iii) good unconditional and bad conditional generation, (iv) bad unconditional and good conditional generation. We argue that the fourth scenario is impossible since the conditional generation metrics always present a more critical evaluation (i.e. a lower bound in IS and upper bound in FID) than the unconditional metric. Therefore bad unconditional generation always leads to bad conditional generation as well. Cases (i) and (ii) are the more trivial cases where the model is either good or bad on both tasks. Case (iii) tells a scenario where the unconditional generation is good but the conditional requirement failed. We will now inspect each model and identify under which scenarios it falls.

{\color{black}The analysis is done by following the results in Tab.~\ref{tab:gansevaluation}. Additionally, Fig.~\ref{fig:cifar},\ref{fig:mnist} show examples of the generation for each dataset for selected classes. 
In generation, relatively to each other, ACGAN and CGAN lie under case (i), InfoGAN under (ii), and SGAN under (iii) for CIFAR10 and (ii) for MNIST.
Both CGAN and ACGAN performed better than InfoGAN and SGAN in all metrics. CGAN and ACGAN performed very similar on CIFAR10 and ACGAN slightly better on MNIST.
Both InfoGAN and SGAN performed poorly compared to the other two models. However, we observed that SGAN performed better than InfoGAN on the unconditional metrics, but worse on the conditional ones. This shows that the condition applied by the SGAN discriminator is less efficient than that of CGAN or ACGAN.

The results in the tables can be understood from Fig.~\ref{fig:cifar},\ref{fig:mnist}. Both CGAN and ACGAN succeed reasonably well in generation and representing the correct class, on both datasets. InfoGAN generates some images that are from the correct class, but wrong classes are often generated as well. Finally, SGAN fails to do any consistent conditioning on CIFAR10 and has many wrong classes in MNIST as well.

The lower performance of InfoGAN is not surprising, because it operates in an unsupervised setting. However, it is surprising to see that the condition applied by its generator (which is similar in design to that of ACGAN) allows it to better perform than SGAN in conditional generation, despite the latter being trained in a supervised fashion.

The experiments on CIFAR10 highlight the importance of conditional evaluation metrics. SGAN performs quite similarly to CGAN and ACGAN in both FID and IS. However, it is in a clear disadvantage when comparing the conditional expansions. 
InfoGAN also highlights the problem of using accuracy as a measure. The accuracy received for the generated images in CIFAR10 is relatively high, and not far behind CGAN and ACGAN. However, the conditional metrics all show that the evaluation based on accuracy is too optimistic.

Another observation we have found in Tab.~\ref{tab:gansevaluation}, is that the IS was not very informative. On both CIFAR10 and MNIST, all models received a relatively high score, even though other metrics showed otherwise. This is not the case for the BCIS metric, which is sensitive in detecting models that performed badly in the conditional sense.

\noindent {\bf User study}
To see how the metrics translate to human perception, we performed a user study for both MNIST and CIFAR10. The user study was performed on 20 participants with knowledge in this field. The participants were not aware of the purpose of the study and did not know which model they were evaluating. The participants were asked to grade the `quality', `diversity' and `class relation' of the generated images between 1 (low) and 10 (high), for each model separately. The results in Tab.~\ref{tab:gansevaluation} show that ACGAN got higher scores on both datasets for conditional generation, with CGAN not far behind, and SGAN performing the worst. This is aligned with the results of the conditional metrics in our experiments. The study also shows that without considering the condition, the overall quality of ACGAN and CGAN is the same, and the quality of InfoGAN is the worst. This agrees with the conclusion of the quantitative evaluation as well.
}

\begin{table}[t]
\caption{Unconditional and conditional metrics on ImageNet for BigGAN. 
$\downarrow$ indicates that a lower value is better and $\uparrow$ otherwise.  }
\label{tab:biggan}
\centering
\begin{tabular}{@{~}l@{~}l@{~}c@{~}c@{~}c@{~}c@{~}c@{~}c@{~}c}
\toprule
&        & \multicolumn{3}{c}{FID} & \multicolumn{3}{c}{IS} &   \\  
\cmidrule(lr){3-5}
\cmidrule(lr){6-8}
&        & -$\downarrow$    & WC$\downarrow$ & BC$\downarrow$ & -$\uparrow$   & WC$\downarrow$ & BC$\uparrow$ & Acc$\uparrow $ \\  
\midrule
& Real Data   & 0.11 & 4.15 & 0.11 & 602.61 & 3.01 & 201.36 & 77.45 \\
\cmidrule(lr){2-9}
& BigGAN        & 0.46 & 6.33  & 0.43 & 363.91  & 5.04 & 72.15 & 51.66 \\
& BigGAN@200K   & 1.06 & 9.17 & 0.94 & 257.38  & 7.62 & 33.71 & 27.70 \\
& BigGAN@100K   & 1.27 & 11.36 & 1.51 & 112.29  & 9.21 & 12.19 & 18.87 \\
\bottomrule
\end{tabular}
\end{table}

\begin{figure}[t]
\centering
\includegraphics[width=0.48\textwidth, trim={50 0 60 0}, clip]{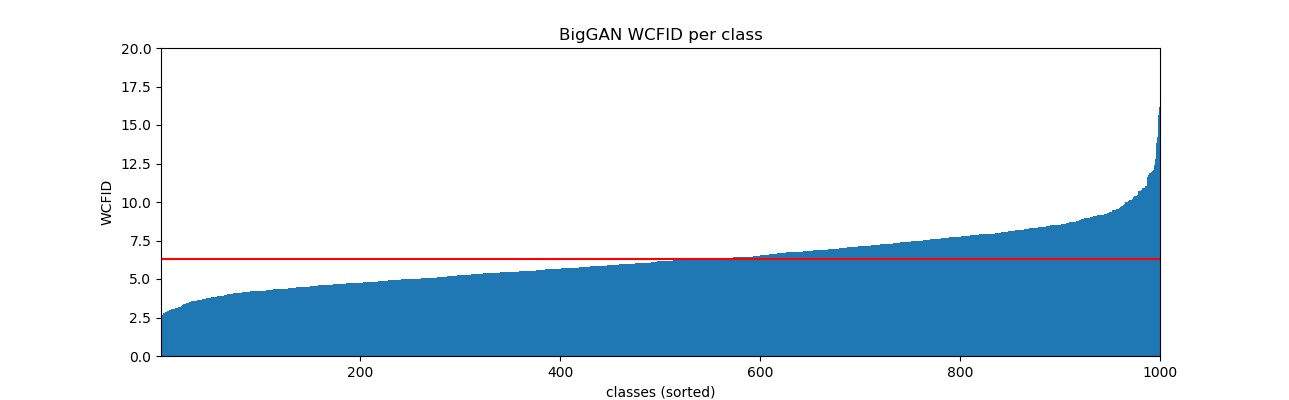} \\
(a) \\
\includegraphics[width=0.48\textwidth, trim={50 0 60 0}, clip]{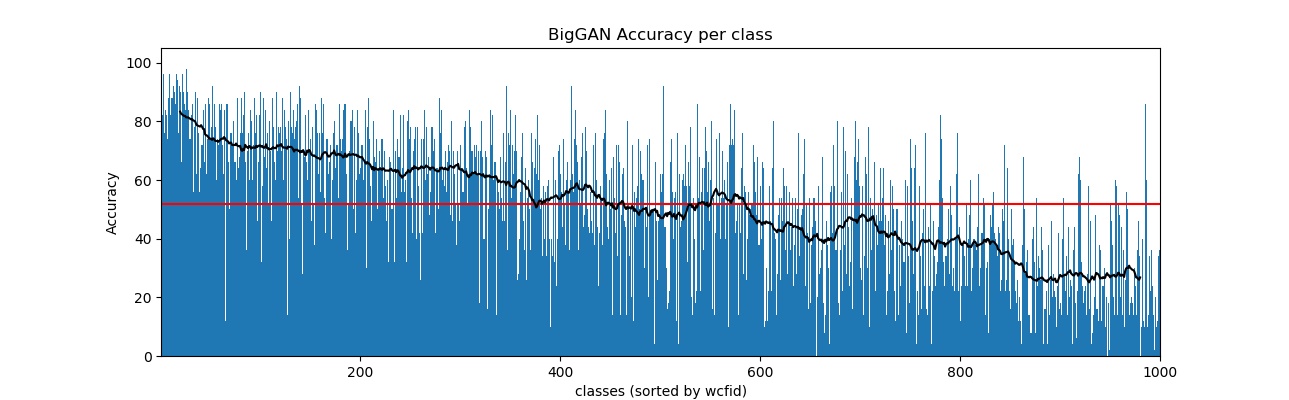} \\
(b) \\
\includegraphics[width=0.48\textwidth, trim={50 0 60 0}, clip]{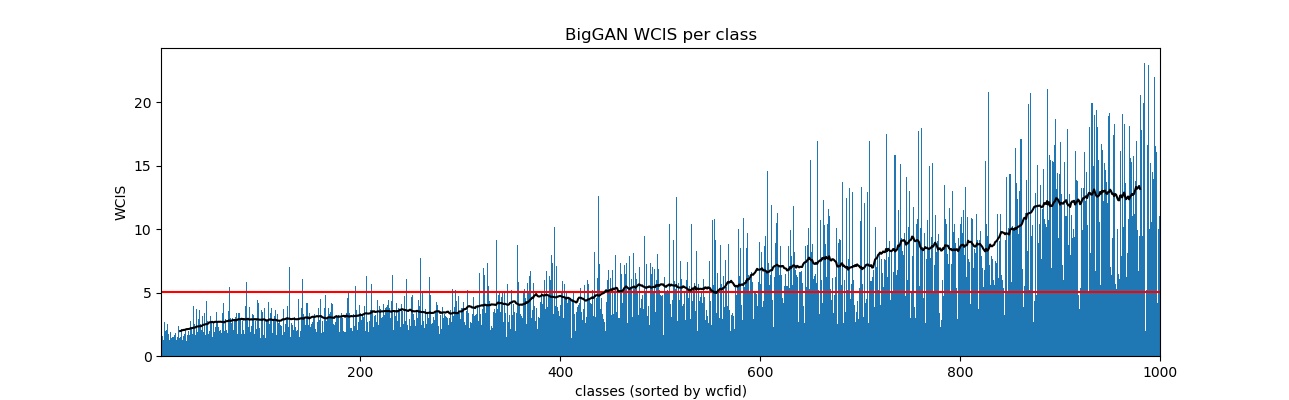} \\
(c) \\
\caption{Not all classes perform equally, and the classes are not ranked similarly for different metrics. (a) WCFID, (b) accuracy, and (c) WCIS, per class for BigGAN. The average score is shown in red. A windowed average over 50 bins is shown in black.}
\label{fig:biggan1}
\end{figure}

\begin{figure*}[t]
\centering
\begin{tabular}{cc}
\includegraphics[width=0.47\textwidth, trim={0 30 90 20}, clip]{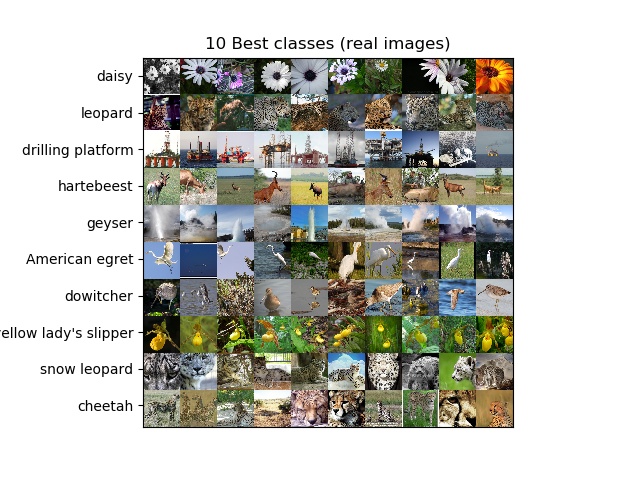} &
\includegraphics[width=0.47\textwidth, trim={90 30 0 20}, clip]{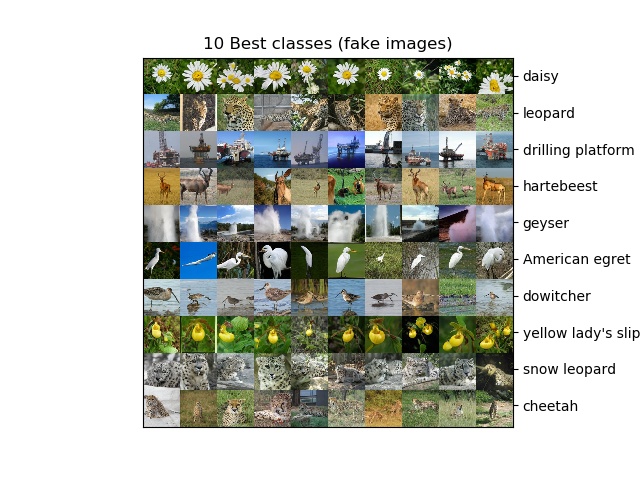} \\
(a)&(b) \\
\includegraphics[width=0.47\textwidth, trim={0 30 90 20}, clip]{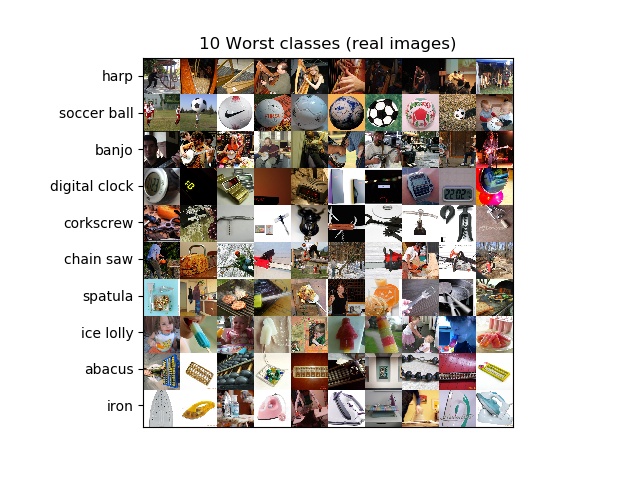} &
\includegraphics[width=0.47\textwidth, trim={90 30 0 20}, clip]{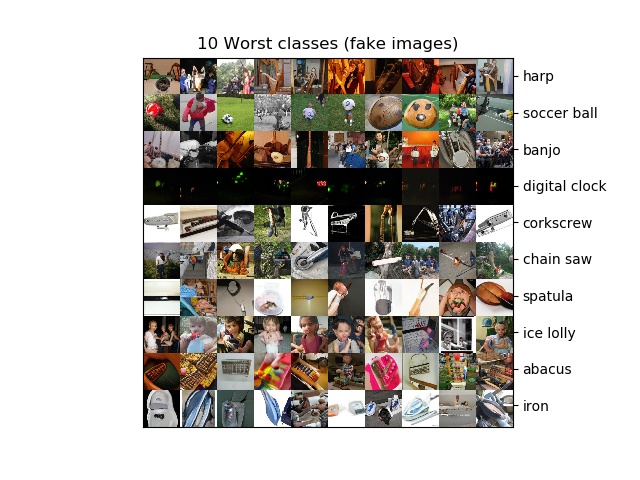} \\
(c)&(d) \\
\end{tabular}
\caption{BigGAN images of best and worst classes in terms of WCFID. (a),(b) real and fake images for classes with the highest WCFID. (c),(d) real and fake images for classes with the lowest WCFID.}
\label{fig:biggan1_wcfid_images}
\end{figure*}

\begin{figure*}[t]
\centering
\begin{tabular}{cc}
\includegraphics[width=0.47\textwidth, trim={0 30 90 20}, clip]{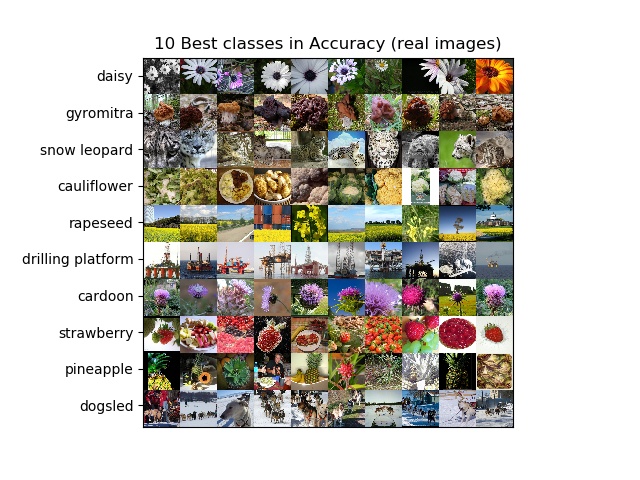} &
\includegraphics[width=0.47\textwidth, trim={90 30 0 20}, clip]{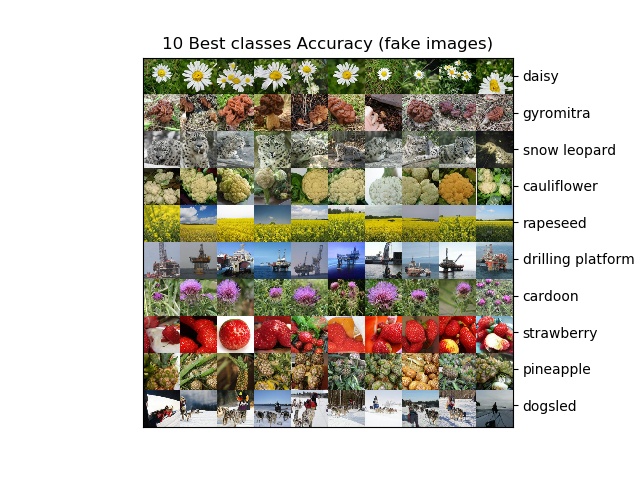} \\
(a)&(b) \\
\includegraphics[width=0.47\textwidth, trim={0 30 90 20}, clip]{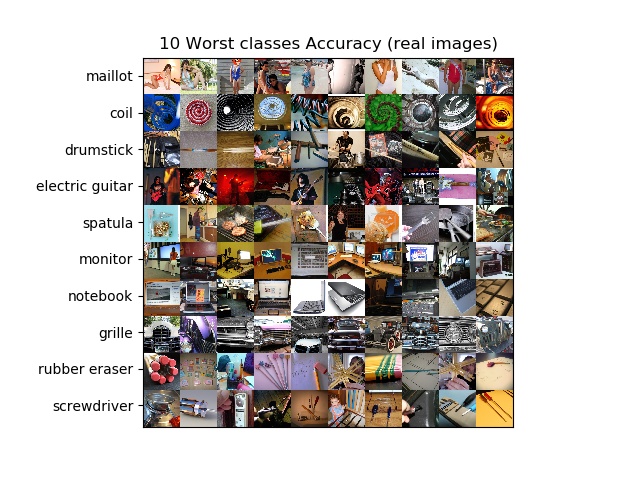} &
\includegraphics[width=0.47\textwidth, trim={90 30 0 20}, clip]{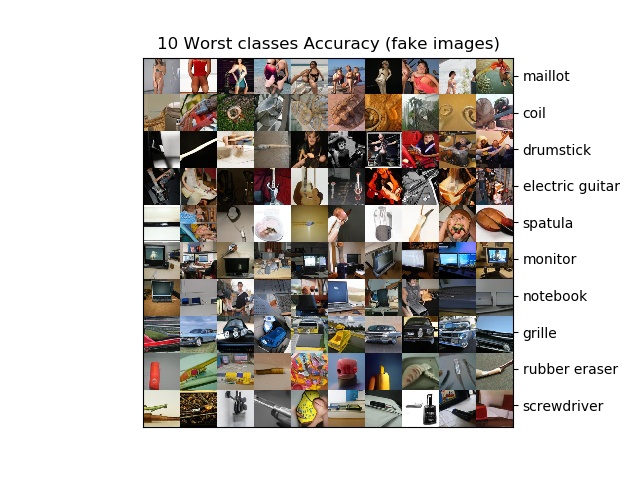} \\
(c)&(d) \\
\end{tabular}
\caption{BigGAN images of best and worst classes in terms of Accuracy. (a),(b) real and fake images for classes with the highest accuracy. (c),(d) real and fake images for classes with the lowest accuracy.}
\label{fig:biggan1_accuracy_images}
\end{figure*}

\subsection{BigGAN In-Depth Analysis}\label{biggan}
BigGAN~\cite{brock2018large} is a state of the art image generation model on the ImageNet~\cite{russakovsky2015imagenet} dataset. In this section we evaluate BigGAN with our metrics and use them to perform an in-depth analysis of its conditional generation capabilities.

{\color{black}
BigGAN's performance on the various metrics can be seen in Tab.~\ref{tab:biggan}. We measured the performance of both the fully trained BigGAN and intermediate stages\footnote{\url{https://github.com/ajbrock/BigGAN-PyTorch}} of the training. Note that for FID, the score is different than in the original paper since we normalized the score by the size of the feature vector, as described in the beginning of Sec.~\ref{experiments}. The results show that BigGAN improves on all metrics during training. They also show, that compared to advanced stages of the training (100K, 200K), the performance is getting close to that of real images from ImageNet.
}

A closer inspection shows a variance in generation quality of the model for the different classes. Fig.~\ref{fig:biggan1}(a) shows us that not all classes have the same WCFID and, instead, some classes are better represented.
Which classes are better than others, can serve as a useful insight for fine-tuning a trained model to concentrate on the worst represented classes, or to compare between various trained generative models.

Fig.~\ref{fig:biggan1_wcfid_images} shows the 10 best and worst classes represented in terms of WCFID.
The WCFID metric has a strong correlation with the quality of the class. The images from classes with the best scores are of high quality and resemblance to the real images. The images for classes with the worst scores do not resemble their target class. In addition, as evident from the experiment, several classes (for example, 'digital clock') have a high WCFID that is due to mode collapse. 

To validate that the accuracy score cannot deliver these insights, we present in Fig.~\ref{fig:biggan1}(b) the accuracy for each class, sorted according to their WCFID (same order as (a)). Similarly to WCFID, not all classes have the same score. However,we can observe that the accuracy scores for each class are only partly (inversely) correlated with the performance in WCFID. 
In order to try and understand the difference between the scores in WCFID and accuracy, Fig.~\ref{fig:biggan1_accuracy_images} shows the best and worst classes in terms of accuracy. Some classes were placed in the top 10 in both metrics (WCFID and accuracy), but others were not equally ranked. When looking at the worst ranked classes, we notice that the low rank in accuracy does not always correlate with a low quality or diversity. For example, 'notebook' and 'monitor' were both ranked at the bottom when considering the accuracy, but looked not as bad as the worst classes in WCFID. We observe that these classes were ranked low not because they were poorly generated, but because it is hard to tell them apart. 

{\color{black}
The same comparison is performed once more in Fig.~\ref{fig:biggan1}(b) for WCIS. Similarly to accuracy, the WCIS does not rank the classes in the same order as WCFID. However, by following the smoothed line over 50 neighboring bind, we conclude that the WCIS is more similar to WCFID than the accuracy. Especially for the classes that are ranked high on WCFID (low score), we have found a high correlation with the classes that are ranked high on WCIS (low score).

Because the Inception Score is ranking the classes by their likeness to their respective class, and WCFID ranks classes by their diversity, we believe it is a good practice to analyze models with both methods.
}

{\color{black}
\section{Discussion}
This work tackled one aspect of conditional generation, the class conditional one. However, conditional generation can come in other forms, such as continuous labels, multi-label, or even text. It is not clear how to expand the existing methods to these settings, and it is also not certain that one method will be good for all settings. It is reasonable, that as more effort will be put on defining the criteria to evaluate conditional generation, new solutions will be able to support other settings as well.

In this work, the formulation of the conditional metrics applies a weighted average on both the classes and the instances of each class.
When the classes are balanced, which is often the case in computer vision datasets, the difference between applying a weighted or a simple average is negligible. However, in many real world scenarios, the class prevalence is unbalanced. In this case, applying a simple or a weighted average will have a strong effect on the final score. On the one hand, applying a weighted average will evaluate the classes proportionally to their prevalence, but might ignore low performance on rare classes. On the other hand, applying a simple weight can magnify the poor performance of a small set of cases, which will result in a poor overall performance. Either approaches are valid, depending on the objective, but different results will be observed.
}

\section{Conclusions}
We presented two new evaluation procedures for class-conditional image generation based on well established metrics for unconditional generation.
These metrics are easy to implement and can be used to compare models from different architectures and to inspect and select the best model during training. 
The proposed metrics are supported by theoretical analysis and a number of experiments. Our metrics are shown to be beneficial in comparing trained models and gaining significant insights when developing models.

\begin{acknowledgements}
This project has received funding from the European Research Council (ERC) under the European Unions Horizon
2020 research and innovation programme (grant ERC CoG 725974).

The contribution of the first author is part of a Ph.D. thesis research conducted at Tel Aviv University.
\end{acknowledgements}

%
%

\bibliographystyle{spmpsci}      
\bibliography{references}   


\clearpage
\onecolumn
\appendix
\setcounter{theorem}{0}

\section{Proofs of the Main Results}\label{sec:proofs}

\begin{lemma}\label{lem:inception} Let $C\sim \mathcal{D}_C$ and $Z\sim \mathcal{D}_Z$ be two independent random variable. Let $X = G(Z,C)$ for a continuous generator function $G$ and let $Y$ be a discrete random variable distributed by $p(y|X)$. Then,
\begin{equation}
\begin{split}
IS(X;Y) = \exp\{I(X;Y)\}
\end{split}
\end{equation}
\end{lemma}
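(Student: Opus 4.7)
The plan is to unfold the definitions of mutual information and KL divergence and recognize the identity directly. Starting from the standard definition
$$I(X;Y) = \mathbb{E}_{(x,y) \sim p_G(x,y)} \left[ \log \frac{p_G(x,y)}{p_G(x)\,p_G(y)} \right],$$
I would factorize the joint density as $p_G(x,y) = p_G(x)\,p_G(y|x)$, which cancels the $p_G(x)$ factor inside the logarithm and yields
$$I(X;Y) = \mathbb{E}_{x \sim \mathcal{D}_G} \mathbb{E}_{y \sim p_G(y|x)} \left[ \log \frac{p_G(y|x)}{p_G(y)} \right].$$

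The second step is to recognize the inner conditional expectation, for each fixed $x$, as precisely the KL divergence $\KL(p_G(y|x) \,\|\, p_G(y))$, since $Y$ is discrete and the definition of KL divergence gives $\KL(p\|q) = \sum_y p(y) \log(p(y)/q(y))$. This turns the identity into
$$I(X;Y) = \mathbb{E}_{x \sim \mathcal{D}_G}\bigl[\KL(p_G(y|x) \,\|\, p_G(y))\bigr].$$

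Finally, I would exponentiate both sides and compare directly against the definition of $IS(X;Y)$ given earlier in the paper, which immediately yields $\exp\{I(X;Y)\} = IS(X;Y)$.

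There is essentially no serious obstacle here; the lemma is a standard rewriting of mutual information in its ``expected KL'' form. The only point requiring a moment of care is handling the mixed continuous/discrete nature of $(X,Y)$: $X$ is a continuous random variable induced by the generator $G$ acting on $(Z,C)$, while $Y$ is discrete. However, since the marginal $p_G(y) = \mathbb{E}_{x \sim \mathcal{D}_G}[p_G(y|x)]$ is well-defined and $p_G(y|x)$ is a probability mass function for each $x$, Fubini's theorem applies and the interchange of expectation and sum in step two is justified without further conditions.
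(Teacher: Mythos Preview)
Your proposal is correct and follows essentially the same approach as the paper's own proof: both arguments unfold the definition of $\KL$ and of mutual information, use the factorization $p_G(x,y) = p_G(x)\,p_G(y|x)$ to pass between the two forms, and then exponentiate. The only cosmetic difference is direction—the paper starts from the $IS$ definition and arrives at $I(X;Y)$, whereas you start from $I(X;Y)$ and arrive at the $IS$ expression—but the algebra is identical.
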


\begin{proof} We consider that:
\begin{equation}
\begin{split}
    IS(X;Y) &= \exp \left\{\mathbb{E}_{x\sim \mathcal{D}_G}[\KL(p_G(y|x) \| p_G(y))] \right\} = \exp\left\{\int_x{p_G(x)\sum_y{p_G(y|x)\cdot \log{\frac{p_G(y|x)}{p_G(y)}}}} \textnormal{d}x\right\} \\
    &= \exp\left\{\int_x{\sum_y{p_G(x,y)\cdot \log{\frac{p_G(x,y)}{p_G(y)\cdot p_G(x)}}}} \textnormal{d}x\right\} \\
    &= \exp\{I(X;Y)\}
\end{split}
\end{equation}
\end{proof}

\begin{theorem} 
Let $C\sim \mathcal{D}_C$ and $Z\sim \mathcal{D}_Z$ be two independent random variable. Let $X = G(Z,C)$ for a continuous generator function $G$ and let $Y$ be a discrete random variable distributed by $p(y|X)$. Then,
\begin{equation}
IS(X;Y) = BCIS(X;Y) \cdot WCIS(X;Y)
\end{equation}
\end{theorem}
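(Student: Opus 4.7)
The plan is to reduce the claim to the chain rule for mutual information. Using Lemma \ref{lem:inception}, write $IS(X;Y) = \exp\{I(X;Y)\}$ and $BCIS(X;Y) = IS(C;Y) = \exp\{I(C;Y)\}$. Then rewrite $WCIS$ as a conditional mutual information: since $(X_c, Y_c)$ has the law of $(X,Y) \mid C=c$, we have
\begin{equation*}
\mathbb{E}_c[I(X_c;Y_c)] = \mathbb{E}_c\,\mathbb{E}_{x\sim \mathcal{D}^c_G}[\KL(p_G(y|x)\|p_G(y|c))] = I(X;Y \mid C),
\end{equation*}
so $WCIS(X;Y) = \exp\{I(X;Y\mid C)\}$. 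Taking logarithms, the theorem is equivalent to the information-theoretic identity
\begin{equation*}
I(X;Y) = I(C;Y) + I(X;Y \mid C).
\end{equation*}

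To establish this identity, I would apply the chain rule of mutual information in two ways to the triple $(X,C;Y)$:
\begin{equation*}
I(X,C;Y) = I(C;Y) + I(X;Y\mid C) = I(X;Y) + I(C;Y \mid X).
\end{equation*}
Hence it suffices to show $I(C;Y\mid X) = 0$, equivalently that $Y$ and $C$ are conditionally independent given $X$. This follows directly from the setup: by hypothesis $Y$ is distributed according to $p(y|X)$, so the conditional law of $Y$ given $(X,C)$ depends on $C$ only through $X$. Concretely, $p(y \mid X, C) = p(y\mid X)$, which is the definition of $Y \perp C \mid X$ and immediately gives $I(C;Y\mid X) = 0$. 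Combining the two chain-rule expansions yields the desired identity, and exponentiating recovers $IS(X;Y) = BCIS(X;Y)\cdot WCIS(X;Y)$.

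The main obstacle is bookkeeping rather than mathematics: one must verify carefully that the definitions in Section \ref{CIS} match the standard information-theoretic objects. Specifically, one has to check that the averaged per-class KL defining $WCIS$ is literally $I(X;Y\mid C)$, which requires confirming that $p_G(y\mid c)$ as defined in the paper coincides with the marginal of $Y$ under the conditional distribution $\mathcal{D}^c_G$, and that integrating $\KL(p_G(y|x)\|p_G(y|c))$ against $x\sim \mathcal{D}^c_G$ and then against $c \sim \mathcal{D}_C$ gives the joint expectation defining conditional mutual information. Once this identification is made, the chain rule and the Markov property $C \to X \to Y$ (built into the assumption $Y\sim p(y|X)$) close the proof with essentially no further computation; the continuity of $G$ is not needed beyond ensuring the joint distribution is well-defined.
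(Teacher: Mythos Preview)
Your proof is correct and rests on the same key fact the paper uses, namely the conditional independence $Y\perp C\mid X$ (equivalently $p_G(y\mid x,c)=p_G(y\mid x)$). The difference is one of packaging: the paper expands $I(C;Y)$ by hand, inserts $\log\frac{p_G(y|x)}{p_G(y|x)}$ inside the integrand, and splits the resulting sum into $I(X;Y)-\mathbb{E}_c[I(X_c;Y_c)]$ through direct algebra on the integrals. You instead identify $\mathbb{E}_c[I(X_c;Y_c)]$ as $I(X;Y\mid C)$ and invoke the chain rule $I(X,C;Y)=I(C;Y)+I(X;Y\mid C)=I(X;Y)+I(C;Y\mid X)$ together with $I(C;Y\mid X)=0$. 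Your route is shorter and makes the structural reason for the identity (the Markov chain $C\to X\to Y$) explicit; the paper's computation is the same identity unpacked from first principles, which has the minor advantage of being self-contained without appealing to the chain rule as a black box.
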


\begin{proof}
By Lem.~\ref{lem:inception}, the Inception Score can be represented as $IS(X;Y) = \exp\{I(X;Y)\}$, and by definition, we have: $BCIS(X;Y) = IS(C;Y)$. Next, we would like to represent $I(C;Y)$ in terms of $I(X;Y)$ and $\mathbb{E}_{c}[I(X_c;Y_c)]$. First, by marginalizing with respect to $X|C$, we have:
\begin{equation}
\begin{aligned}
I(C;Y)  =& \sum_{c} p(c) \sum_y p_G(y|c) \cdot \log\frac{p_G(y|c)}{p_G(y)}  \\
            =& \sum_c p(c) \sum_y \left( \int_{x} {\frac{p_G(x,c) \cdot p_G(y|x,c)}{p(c)}} \textnormal{ d}x \right)\cdot \log \frac{p_G(y|c)}{p_G(y)}  \\
\end{aligned}
\end{equation}
Since $Y$ is independent of $C$ given $X$, we have: $p_G(y|x,c) = p_G(y|x)$. Hence,
\begin{equation}
\begin{aligned}
I(C;Y)  =& \sum_c \sum_y \left( \int_{x} p_G(x,c) \cdot p_G(y|x) \textnormal{ d}x \right)\cdot \log \frac{p_G(y|c)}{p_G(y)} \\
=& \sum_c \left(\int_x \sum_y
        p_G(x,c)\cdot p_G(y|x) \cdot \log\left(\frac{p_G(y|c)}{p_G(y)} \cdot \frac{p_G(y|x)}{p_G(y|x)}\right) \textnormal{ d}x \right) \\
=& \sum_c \int_x \sum_y
        p_G(x,c)\cdot p_G(y|x) \cdot \log \frac{p_G(y|x)}{p_G(y)} \textnormal{ d}x  + \sum_c \int_x \sum_y
        p_G(x,c)\cdot p_G(y|x) \cdot \log \frac{p_G(y|c)}{p_G(y|x)} \textnormal{ d}x \\
\end{aligned}
\end{equation}
We consider that $p_G(x)=\sum_c p_G(x,c)$. Therefore, we have:
\begin{equation}
\begin{aligned}
I(C;Y)  =& \int_x \sum_y p_G(x)\cdot p_G(y|x) \cdot \log \frac{p_G(y|x)}{p_G(y)} \textnormal{ d}x  + \sum_c \int_x \sum_y
    p_G(x,c)\cdot p_G(y|x) \cdot \log \frac{p_G(y|c)}{p_G(y|x)} \textnormal{ d}x \\
=& I(X;Y) + \sum_c \int_x \sum_y
    p_G(x,c)\cdot p_G(y|x) \cdot \log \frac{p_G(y|c)}{p_G(y|x)} \textnormal{ d}x \\
=& I(X;Y) - \sum_c \int_x \sum_y
    p_G(x,c)\cdot p_G(y|x) \cdot \log \frac{p_G(y|x)}{p_G(y|c)}  \textnormal{ d}x \\ 
=& I(X;Y) - \sum_c p(c) \int_x \sum_y
    p_G(x|c)\cdot p_G(y|x) \cdot \log \frac{p_G(y|x)}{p_G(y|c)}  \textnormal{ d}x \\
=& I(X;Y) - \sum_c{p(c) \cdot I(X_c;Y_c)} \\
=& I(X;Y) - \mathbb{E}_{c}[I(X_c;Y_c)] \\
\end{aligned}
\end{equation}
Finally, we conclude that:
\begin{equation}
\begin{aligned}
BCIS(X;Y) &= \exp\{I(C;Y)\} \\
&= \exp\{I(X;Y) - \mathbb{E}_{c}[I(X_c;Y_c)]\} \\
&= \frac{\exp\{I(X;Y)\}}{\exp\{\mathbb{E}_{c}[I(X_c;Y_c)]\}} = \frac{IS(X;Y)}{WCIS(X;Y)}
\end{aligned}
\end{equation}
\end{proof}

\begin{theorem} Let $\mathcal{D}_R$ and $\mathcal{D}_G$ be the distributions of real and generated samples. Then,
\begin{equation}
FID(\mathcal{D}_R,\mathcal{D}_G) \leq BCFID(\mathcal{D}_R,\mathcal{D}_G) + WCFID(\mathcal{D}_R,\mathcal{D}_G)
\end{equation}
and the bound is tight. 
\end{theorem}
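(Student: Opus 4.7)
The plan is to decompose the FID expression into three natural pieces and match each piece against the right-hand side. First I would observe that by the law of total expectation, $\mu_B^E = \mathbb{E}_c[\mu_c^E] = \mu^E$, so the mean-squared term $\|\mu^R-\mu^G\|^2$ appearing in $FID$ is \emph{identical} to the mean-squared term inside $BCFID$. Next, the law of total covariance
\[
\Sigma^E \;=\; \Sigma_B^E + \mathbb{E}_c[\Sigma_{W,c}^E]
\]
yields, after taking traces, $\Tr(\Sigma^R+\Sigma^G) = \Tr(\Sigma_B^R+\Sigma_B^G) + \mathbb{E}_c[\Tr(\Sigma_{W,c}^R+\Sigma_{W,c}^G)]$, so these "linear" contributions already match on both sides exactly. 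At this point the theorem reduces to controlling the cross term.

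Concretely, I need
\[
\Tr\!\bigl((\Sigma^R\Sigma^G)^{1/2}\bigr) \;\geq\; \Tr\!\bigl((\Sigma_B^R\Sigma_B^G)^{1/2}\bigr) + \mathbb{E}_c\!\left[\Tr\!\bigl((\Sigma_{W,c}^R\Sigma_{W,c}^G)^{1/2}\bigr)\right],
\]
where the inequality direction matches the $-2$ factor in the FID formula. To handle this I would work with the symmetrized functional $T(A,B):=\Tr((A^{1/2}BA^{1/2})^{1/2})$, which coincides with $\Tr((AB)^{1/2})$ on PSD pairs (since $AB$ and $A^{1/2}BA^{1/2}$ share their nonzero spectrum). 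The two ingredients are (i) the joint concavity of $T$ on PSD pairs --- a classical fidelity/Bures concavity due to Lieb/Ando --- and (ii) the joint 1-homogeneity $T(\lambda A,\lambda B) = \lambda T(A,B)$, which is an immediate computation from the definition. Combining these in the standard way (write $T(A_1{+}A_2, B_1{+}B_2) = 2T(\tfrac{A_1+A_2}{2}, \tfrac{B_1+B_2}{2}) \geq T(A_1,B_1)+T(A_2,B_2)$, then iterate) yields superadditivity:
\[
T\!\Bigl(\textstyle\sum_i A_i,\; \sum_i B_i\Bigr) \;\geq\; \sum_i T(A_i,B_i)
\]
for any finite family of PSD matrices. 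Applying this to the splittings $\Sigma^R = \Sigma_B^R + \sum_c p(c)\Sigma_{W,c}^R$ and $\Sigma^G = \Sigma_B^G + \sum_c p(c)\Sigma_{W,c}^G$, and using 1-homogeneity to pull each $p(c)$ out of $T(p(c)\Sigma_{W,c}^R, p(c)\Sigma_{W,c}^G) = p(c)\,T(\Sigma_{W,c}^R,\Sigma_{W,c}^G)$, gives precisely the required cross-term inequality.

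The main obstacle is invoking the Lieb/Ando joint-concavity result for $T$; the rest is bookkeeping via the law of total covariance combined with the homogeneity computation. Summing the matched mean term, the matched trace-of-sum term, and the cross-term inequality (with its sign flipped by the $-2$) delivers the conclusion $FID(\mathcal{D}_R,\mathcal{D}_G) \leq BCFID(\mathcal{D}_R,\mathcal{D}_G)+WCFID(\mathcal{D}_R,\mathcal{D}_G)$. For tightness, I would trace back through the chain: equality in each superadditivity step is saturated when the paired PSD matrices are "proportional" in a suitable simultaneously-diagonalizable sense (e.g., all within-class covariances collapse to scalar multiples of a common projector aligned with $\Sigma_B^{R},\Sigma_B^G$), which provides the "certain conditions" alluded to in the statement and recovers, as a sanity check, the trivial scalar inequality $\sqrt{(a+b)(c+d)} \geq \sqrt{ac}+\sqrt{bd}$ with equality iff $ad = bc$.
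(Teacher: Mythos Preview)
Your overall strategy matches the paper's: identify $\mu_B^E=\mu^E$, invoke the law of total covariance $\Sigma^E=\Sigma_B^E+\mathbb{E}_c[\Sigma_{W,c}^E]$, and reduce everything to an inequality on the square-root cross terms. There is, however, one bookkeeping slip. You say the mean-squared term in $FID$ is identical to the one in $BCFID$ and conclude that ``the theorem reduces to controlling the cross term.'' That overlooks the \emph{additional} mean contribution sitting inside $WCFID$, namely $\mathbb{E}_c\bigl[\|\mu_c^R-\mu_c^G\|^2\bigr]$, which has no counterpart on the $FID$ side. This term is of course nonnegative, so it only helps the inequality, but your reduction as stated is incomplete; the paper tracks this term explicitly as part of the slack quantity $M$ and also uses it in the tightness discussion (where one needs $\mu_c^R=\mu_c^G$ for all $c$).

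On the cross-term step itself, your route and the paper's diverge in the cited tool but not in substance. You appeal to Lieb/Ando joint concavity of the fidelity functional $T(A,B)=\Tr\bigl((A^{1/2}BA^{1/2})^{1/2}\bigr)$ together with $1$-homogeneity to obtain superadditivity, then apply it to the decomposition $\Sigma^E=\Sigma_B^E+\sum_c p(c)\Sigma_{W,c}^E$. The paper instead rewrites the difference in terms of $\Tr\bigl((A^{1/2}-B^{1/2})^2\bigr)$ and invokes a Jensen-type trace inequality for the operator-convex function $(x_1,x_2)\mapsto(x_1^{1/2}-x_2^{1/2})^2$. Since joint concavity of $T$ and operator convexity of that difference-of-roots functional are two faces of the same coin, both arguments are equivalent in strength; your formulation via fidelity superadditivity is arguably the cleaner and more standard packaging. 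For tightness the paper goes further than your sketch: it gives an explicit two-class construction with simultaneously diagonalizable covariances in which, for each coordinate, at most one of the within-class/between-class variances is nonzero (and the same index is active for $R$ and $G$), together with $\mu_c^R=\mu_c^G$ for all $c$. Your ``proportional in a simultaneously-diagonalizable sense'' description gestures at this but would benefit from being made concrete.
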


\begin{proof}
First, we recall the definitions of the FID and BCFID measures:
\begin{equation}
FID(\mathcal{D}_G,\mathcal{D}_R) = \|\mu^R - \mu^G\|^2 + \Tr(\Sigma^R + \Sigma^G - 2(\Sigma^R\Sigma^G)^\frac{1}{2})
\end{equation}
and
\begin{equation}
BCFID(\mathcal{D}_G,\mathcal{D}_R) = \|\mu^R_B - \mu^G_B\|^2 + \Tr(\Sigma^R_B + \Sigma^G_B - 2(\Sigma^R_B\Sigma^G_B)^\frac{1}{2})
\end{equation}
We notice that $\mu^R_B = \mu^R$ and $\mu^G_B = \mu^G$. Hence, the only difference between the two quantities arises from the second terms. 

Next, we would like to develop the formulation of $\Sigma^E_{W,c}$ for $E\in \{R,G\}$:
\begin{equation}
\begin{aligned}
\Sigma^E_{W,c}
=& \mathbb{E}_{x \sim X^E_c}\left[(f(x) - \mu^E_c)(f(x) - \mu^E_c)^T\right] \\
=& \int_{x} p_E(x|c)\cdot (f(x) - \mu^E_c)(f(x) - \mu^E_c)^T \;\textnormal{d}x \\
=& \int_{x} p_E(x|c) \cdot (f(x) - \mu^E + \mu^E - \mu^E_c)(f(x) - \mu^E + \mu^E - \mu^E_c)^T \;\textnormal{d}x \\
=& \int_{x} p_E(x|c) \cdot \left[\left((f(x) - \mu^E) + (\mu^E - \mu^E_c)\right)\left((f(x) - \mu^E) + (\mu^E - \mu^E_c)\right)^T\right] \;\textnormal{d}x \\
=& (\mu^E - \mu^E_c)(\mu^E - \mu^E_c)^T \\
&+ \int_{x} p_E(x|c) \cdot (f(x) - \mu^E)(f(x) - \mu^E)^T \; \textnormal{d}x + \int_{x} p_E(x|c) \cdot [(f(x) - \mu^E)(\mu^E - \mu^E_c)^T + (\mu^E - \mu^E_c)(f(x) - \mu^E)^T] \; \textnormal{d}x \\
=& (\mu^E - \mu^E_c)(\mu^E - \mu^E_c)^T + \int_{x} p_E(x|c) \cdot (f(x) - \mu^E)(f(x) - \mu^E)^T \; \textnormal{d}x + \int_{x} p_E(x|c) \cdot [(f(x) - \mu^E)(\mu^E - \mu^E_c)^T \\
&+ (\mu^E - \mu^E_c)(f(x) - \mu^E)^T] \; \textnormal{d}x \\
=& \int_{x} p_E(x|c) \cdot (f(x) - \mu^E)(f(x) - \mu^E)^T \; \textnormal{d}x  + (\mu^E - \mu^E_c)(\mu^E - \mu^E_c)^T + 2(\mu^E_c - \mu^E)(\mu^E - \mu^E_c)^T \\
=& \int_{x} p_E(x|c) \cdot (f(x) - \mu^E)(f(x) - \mu^E)^T \; \textnormal{d}x - (\mu^E - \mu^E_c)(\mu^E - \mu^E_c)^T
\end{aligned}
\end{equation}
Hence,
\begin{equation}
\begin{split}
\mathbb{E}_{c \sim \mathcal{D}_C}[\Sigma^E_{W,c}] 
=& \sum_c p(c) \cdot \int_{x} p_E(x|c) \cdot (f(x) - \mu^E)(f(x) - \mu^E)^T \;\textnormal{d}x - \sum_c p(c) \cdot (\mu^E_c - \mu^E)(\mu^E_c - \mu^E)^T = \Sigma^E - \Sigma^E_B
\end{split}
\end{equation}
In particular, 
\begin{equation}
\Sigma^E = \Sigma^E_B + \mathbb{E}_c[\Sigma^E_{W,c}] = \Sigma^E_B + \Sigma^E_W
\end{equation}
Therefore, we summarize:
\begin{equation}
\begin{split}
FID(\mathcal{D}_R,\mathcal{D}_G) =& \|\mu^R - \mu^G\|^2 + \Tr(\Sigma^R + \Sigma^G - 2(\Sigma^R\Sigma^G)^\frac{1}{2}) \\
    =& \|\mu^R_B - \mu^G_B\|^2 + \Tr\left(\Sigma^R_B + \mathbb{E}_c[\Sigma^R_{W,c}] \right) + \Tr\left(\Sigma^G_B + \mathbb{E}_c[\Sigma^G_{W,c}] \right) - 2\Tr\left( \left[\left(\mathbb{E}_c[\Sigma^R_{W,c}] + \Sigma^R_B \right) \left(\mathbb{E}_c[\Sigma^G_{W,c}] + \Sigma^G_B\right) \right]^\frac{1}{2} \right)
\end{split}
\end{equation}
Now we can say the following:
\begin{equation}
\begin{split}
FID(\mathcal{D}_R,\mathcal{D}_G) =& BCFID(\mathcal{D}_R,\mathcal{D}_G) + WCFID(\mathcal{D}_R,\mathcal{D}_G) - \sum_c p(c) \cdot \|\mu^R_c -\mu^G_c\|^2 \\
&- 2Tr\left( \left( \left(\mathbb{E}_c[\Sigma^R_{W,c}] + \Sigma^R_B \right) \cdot \left(\mathbb{E}_c[\Sigma^G_{W,c}] + \Sigma^G_B\right) \right)^\frac{1}{2} \right) + 2Tr\left(\sum_c{p(c)(\Sigma^R_{W,c}\Sigma^G_{W,c})^\frac{1}{2}} + (\Sigma^R_B\Sigma^G_B)^\frac{1}{2} \right)
\end{split}
\end{equation}
We denote:
\begin{equation}
\begin{split}
M   :=&  \sum_c p(c) \cdot \|\mu^R_c -\mu^G_c\|^2 \\
&+ 2\Tr\left( \left( \left(\mathbb{E}_c[\Sigma^R_{W,c}] + \Sigma^R_B \right) \cdot \left(\mathbb{E}_c[\Sigma^G_{W,c}] + \Sigma^G_B\right) \right)^\frac{1}{2} \right) \\
&- 2\Tr\left(\sum_c{p(c)(\Sigma^R_{W,c}\Sigma^G_{W,c})^\frac{1}{2}} + (\Sigma^R_B\Sigma^G_B)^\frac{1}{2} \right)
\end{split}
\end{equation}
Next, we would like to show that $M \ge 0$. We consider that $M$ sums the non-negative term $\sum_c{p(c)\|\mu^R_c -\mu^G_c\|^2}$ with the following term:
\begin{equation}
\begin{aligned}
&\Tr\left( \left( \left(\mathbb{E}_c[\Sigma^R_{W,c}] + \Sigma^R_B \right) \cdot \left(\mathbb{E}_c[\Sigma^G_{W,c}] + \Sigma^G_B\right) \right)^\frac{1}{2} \right) \\
&- \Tr\left(\sum_c{p(c)(\Sigma^R_{W,c}\Sigma^G_{W,c})^\frac{1}{2}} + (\Sigma^R_B\Sigma^G_B)^\frac{1}{2} \right)\\
=&\Tr\left( \left( \left(\sum_c p(c)\cdot \Sigma^R_{W,c} + \Sigma^R_B \right) \cdot \left(\sum_c p(c) \cdot \Sigma^G_{W,c} + \Sigma^G_B\right) \right)^\frac{1}{2} \right) \\
&- \Tr\left(\sum_c{(p(c) \cdot \Sigma^R_{W,c} \cdot p(c) \cdot \Sigma^G_{W,c})^\frac{1}{2}} + (\Sigma^R_B\Sigma^G_B)^\frac{1}{2} \right) \\
=& \Tr\left( \sum_c p(c)\left(\left(\Sigma^R_{W,c}\right)^\frac{1}{2} - \left(\Sigma^G_{W,c}\right)^\frac{1}{2} \right)^2 + \left(\left(\Sigma^R_{B}\right)^\frac{1}{2} - \left(\Sigma^G_{B}\right)^\frac{1}{2} \right)^2\right) \\
 &- \Tr\left( \left( (\sum_c p(c)\Sigma^R_{W,c} + \Sigma^R_B )^\frac{1}{2} - (\sum_c p(c)\Sigma^G_{W,c} + \Sigma^G_B)^\frac{1}{2} \right)^2 \right) \\
\end{aligned}
\end{equation}

Since the function $H(x_1, x_2) = \left( x_1^{1/2} - x_2^{1/2} \right)^2$ is convex, by Jensen's trace inequality, the above term is non-negative.
 
This implies the desired inequality:
\begin{equation}
FID(\mathcal{D}_R,\mathcal{D}_G) \leq BCFID(\mathcal{D}_R,\mathcal{D}_G) + WCFID(\mathcal{D}_R,\mathcal{D}_G)
\end{equation}

Finally, we would like to demonstrate the tightness of the bound, aside from the trivial case of all or some of the covariance matrices being $0$ and $\mu^R_c = \mu^G_c$ for all $c$. Consider a case where all of the matrices $\Sigma^E_B$ and $\Sigma^E_{W,c}$ are simultanously diagonalizable, i.e., there exist an invertible matrix $P$, such that:
\begin{equation}
\Sigma^E_B = P \cdot \Lambda^E_B \cdot  P^{-1} \textnormal{ and } \Sigma^E_{W,c} = P \cdot \Lambda^E_{W,c} \cdot P^{-1}
\end{equation}
where $\Lambda^E_B$ and $\Lambda^E_{W,c}$ are the diagonal matrices of the eigenvalues of $\Sigma^E_B$ and $\Sigma^E_{W,c}$ respectively.

Since all matrices are diagonal, we can rewrite $M$ as follows:
\begin{equation}
\begin{split}
M   :=&  \sum_c p(c) \cdot \|\mu^R_c -\mu^G_c\|^2 \\
&+ 2\Tr\left( \left( \left(\mathbb{E}_c[\Lambda^R_{W,c}] + \Lambda^R_B \right) \cdot \left(\mathbb{E}_c[\Lambda^G_{W,c}] + \Lambda^G_B\right) \right)^\frac{1}{2} \right) \\
&- 2\Tr\left(\sum_c{p(c)(\Lambda^R_{W,c}\Lambda^G_{W,c})^\frac{1}{2}} + (\Lambda^R_B\Lambda^G_B)^\frac{1}{2} \right) \\
=& \sum_c p(c) \cdot \|\mu^R_c -\mu^G_c\|^2 \\
&+ 2\sum^{k}_{d=1} \left( \left(\sum_c p(c) \cdot \sigma^R_{{W,c,d}} + \sigma^R_{B,d} \right)^{1/2} \left(\sum_c p(c)\cdot \sigma^G_{{W,c,d}} + \sigma^G_{B,d}\right)^{1/2}\right) \\
&- 2\sum^{k}_{d=1} \left(\sum_c p(c)\cdot (\sigma^R_{{W,c,d}})^{1/2} \cdot (\sigma^G_{{W,c,d}})^{1/2} + (\sigma^R_{B,d})^{1/2} (\sigma^G_{B,d})^{1/2} \right)
\end{split}
\end{equation}
where $\sigma^E_{m,d}$ denotes the $d$-th element on the diagonal of the matrix $\Lambda^E_{m}$ ($m$ is a specifier of the form $(W,c)$ or $B$). In addition, $k$ is the output dimension of $f$.

In addition, assume that (i) for each $d\in [k]$ there is only one member of $\{\sigma^R_{W,c,d}\}_{d=1}^{k} \cup \{\sigma^R_{B,d}\}$ that is nonzero, and the same for $\{\sigma^G_{W,c,d}\}_{d=1}^{k} \cup \{\sigma^G_{B,d}\}$ and, (ii) it has the same index. e.g. for $d=3$, only $\sigma^R_{{W,2,3}}$ is nonzero in the variances of the real data and the equivalent $\sigma^G_{{W,2,3}}$ is the only nonzero variance in the generated data. 

Finally, if we also have, $\mu^R_c = \mu^G_c$, for all $c$, then we get $M=0$.

For example, we consider the following setting. Let $\mathcal{D}_C$ be a distribution over two classes $c=1$ and $c=2$. Let $g$ be a function that satisfies:
$\mathbb{E}_{x \sim \mathcal{D}^c_E}[g(x)] = 1$ for all classes $c=1,2$ and specifiers $E = R,G$. We consider a function $f:\mathbb{R}^n\to\mathbb{R}^2$ that satisfies the following: for any sample $x \sim \mathcal{D}^1_E$ or $x \sim \mathcal{D}^2_E$, we have: $f(x) = (1,g(x))$ and $f(x) = (g(x),1)$ respectively. Hence, $\mu^E_c = (1,1)$ for all $c$ and $E=R,G$. Therefore, we have:
\begin{equation}
\sum_c p(c) \cdot \|\mu^R_c -\mu^G_c\|^2 = 0
\end{equation}
and also
\begin{equation}
\Sigma^{E}_{B} = 
\begin{pmatrix}
0 & 0 \\
0 & 0 \\
\end{pmatrix}, \;\;
\Sigma^{E}_{W,1} = 
\begin{pmatrix}
\sigma^E_1 & 0 \\
0 & 0 \\
\end{pmatrix}, \;\;
\Sigma^{E}_{W,2} = 
\begin{pmatrix}
0 & 0 \\
0 & \sigma^E_2 \\
\end{pmatrix}
\end{equation}
where $\sigma^E_i$ is the standard deviation of $g(x)$ for $x \sim \mathcal{D}^{c}_{E}$. Therefore, we have:
\begin{equation}
\begin{aligned}
&\Tr\left( \left( \left(\mathbb{E}_c[\Sigma^R_{W,c}] + \Sigma^R_B \right) \cdot \left(\mathbb{E}_c[\Sigma^G_{W,c}] + \Sigma^G_B\right) \right)^\frac{1}{2} \right) \\
=& \Tr \left( \left(\mathbb{E}_c[\Sigma^R_{W,c}] \cdot \mathbb{E}_c[\Sigma^G_{W,c}] \right)^\frac{1}{2} \right) \\
=& \sqrt{\sigma^R_1 \cdot \sigma^R_2} + \sqrt{\sigma^{G}_1 \cdot \sigma^G_2} \\
=& \Tr\left(\sum_c{p(c)(\Sigma^R_{W,c}\Sigma^G_{W,c})^\frac{1}{2}} \right)\\
=& \Tr\left(\sum_c{p(c)(\Sigma^R_{W,c}\Sigma^G_{W,c})^\frac{1}{2}} + (\Sigma^R_B\Sigma^G_B)^\frac{1}{2} \right) 
\end{aligned}
\end{equation}
We conclude that $M=0$ as desired.
\end{proof}

Naturally, these cases assume perfect alignment between the real and generated data and also between and within classes in each dataset, which is very unlikely. This really highlights how much information is lost when measuring FID instead of the BCFID + WCFID as an evaluation score.

\end{document}